\def\eqref#1{equation~\ref{#1}}
\def\1{\bm{1}}
\DeclareMathAlphabet{\mathsfit}{\encodingdefault}{\sfdefault}{m}{sl}
\SetMathAlphabet{\mathsfit}{bold}{\encodingdefault}{\sfdefault}{bx}{n}
\newcommand{\E}{\mathbb{E}}
\crefname{section}{Sec.}{Secs.}
\Crefname{section}{Section}{Sections}
\Crefname{table}{Table}{Tables}
\crefname{table}{Tab.}{Tabs.}
\theoremstyle{plain}
\newtheorem{theorem}{Theorem}
\newtheorem{lemma}{Lemma}
\newtheorem{corollary}{Corollary}
\theoremstyle{definition}
\newtheorem{assumption}{Assumption}
\newtheorem{remark}{Remark}
\title{Decentralized Directed Collaboration for Personalized Federated Learning }
\author{
Yingqi Liu\textsuperscript{\rm 1}
\quad
Yifan Shi\textsuperscript{\rm 2} 
\quad
Qinglun Li \textsuperscript{\rm 3} 
\quad
Baoyuan Wu\textsuperscript{\rm 4} 
\quad
Xueqian Wang\textsuperscript{\rm 2}
\quad 
Li Shen\textsuperscript{\rm 5 }\thanks{Corresponding Author.}\\ 
\textsuperscript{\rm 1}Nanjing University of Science and Technology, Nanjing, China;
\textsuperscript{\rm 2}Tsinghua University, Shenzhen, China; \\
\textsuperscript{\rm 3}National University of Defense Technology, Changsha, China;\\
\textsuperscript{\rm 4}The Chinese University of Hong Kong, Shenzhen, China; 
\textsuperscript{\rm 5}JD Explore Academy; Beijing, China.\\
{\tt\small lyq@njust.edu.cn;
shiyf21@mails.tsinghua.edu.cn;
liqinglun@nudt.edu.cn;}\\
{\tt\small wubaoyuan@cuhk.edu.cn;
wang.xq@sz.tsinghua.edu.cn; 
mathshenli@gmail.com.}
}
\begin{document}
\maketitle
\begin{abstract}

Personalized Federated Learning (PFL) is proposed to find the greatest personalized models for each client. To avoid the central failure and communication bottleneck in the server-based FL, we concentrate on the Decentralized Personalized Federated Learning (DPFL) that performs distributed model training in a Peer-to-Peer (P2P) manner. Most personalized works in DPFL are based on undirected and symmetric topologies, however, the data, computation and communication resources heterogeneity result in large variances in the personalized models, which lead the undirected aggregation to suboptimal personalized performance and unguaranteed convergence. 
To address these issues, we propose a directed collaboration DPFL framework by incorporating stochastic gradient push and partial model personalized, called \textbf{D}ecentralized \textbf{Fed}erated \textbf{P}artial \textbf{G}radient \textbf{P}ush (\textbf{DFedPGP}). It personalizes the linear classifier in the modern deep model to customize the local solution and learns a consensus representation in a fully decentralized manner. Clients only share gradients with a subset of neighbors based on the directed and asymmetric topologies, which guarantees flexible choices for resource efficiency and better convergence.
Theoretically, we show that the proposed DFedPGP achieves a superior convergence rate of 
$\mathcal{O}(\frac{1}{\sqrt{T}})$ 
in the general non-convex setting, and prove the tighter connectivity among clients will speed up the convergence. 
The proposed method achieves state-of-the-art (SOTA) accuracy in both data and computation heterogeneity scenarios, demonstrating the efficiency of the directed collaboration and partial gradient push.

\end{abstract}
\vspace{-0.4cm}
\section{Introduction}

\begin{figure}[th] 
\centering
\includegraphics[width=0.4\textwidth]{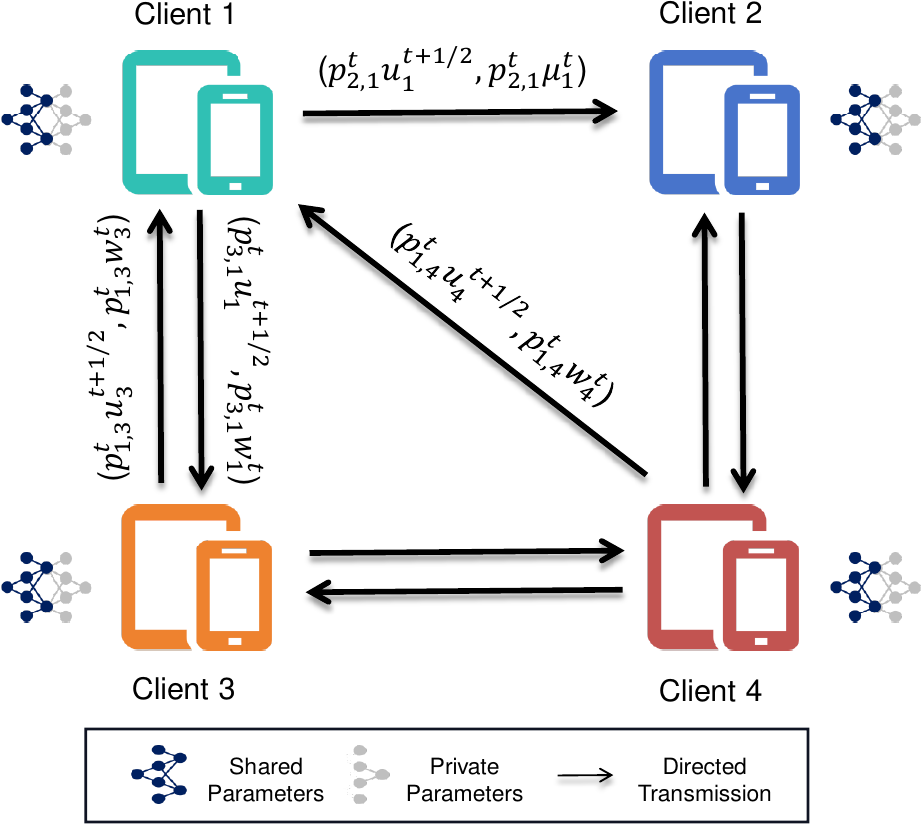}
\centering
\caption{\small An overview of the DFedPGP with a directed graph. We take Client 1 as an example. It pushes the shared parameters $p^t_{j,1},u_1^{t+1/2}$ and bias information $p_{j,1}^t,\mu_1^t$ to its out-neighbors (Client 2, 3); pulls the shared parameters $p_{1,j}^t,u_j^{t+1/2}$ and bias information $p_{1,j}^t,\mu_j^t$ from its in-neighbors (Client 3, 4).
}\label{fig:dfedpgp}
 \vspace{-0.3cm}
\label{fig:abla}
\end{figure}

Recently, Personalized Federated Learning (PFL) has emerged to find the best model for each client since one consensus model can not satisfy all clients' needs in classical Federated Learning (FL) \citep{ye2023heterogeneous}. The existing PFL algorithm can be categorized into two branches in terms of the existence of the centralized server (i.e., Centralized Personalized Federated Learning (CPFL) \citep{arivazhagan2019federated,lin2020ensemble,huang2021personalized,oh2021fedbabu} and Decentralized Personalized Federated Learning (DPFL) \citep{Rong2022DisPFL,li2023effectiveness,sui2022find}). The challenges of centralized communication bottleneck or central failure may incur low communication efficiency or system crash in the federated processing. Thus, we focus on the DPFL, which allows edge clients to communicate with each other in a peer-to-peer manner, aiming to reduce the communication column of the busiest server node and embrace peer-to-peer communication for faster convergence. In decentralized FL, clients usually follow an undirected and symmetric communication topology to reach a consensus model \citep{shi2023improving,sun2022decentralized, Rong2022DisPFL}, which means if one client receives neighbors' models, it sends its model back.

In order to satisfy the unique needs of individual clients, most existing works in PFL carefully designed the relationships between the global model and personalized models to fit the local data distribution via different techniques, such as parameter decoupling \citep{arivazhagan2019federated, collins2021exploiting, oh2021fedbabu}, knowledge distillation  \citep{li2019fedmd, lin2020ensemble, he2020group}, multi-task learning  \citep{huang2021personalized, shoham2019overcoming}, model interpolation  \citep{deng2020adaptive, diao2020heterofl} and clustering  \citep{ghosh2020efficient, sattler2020clustered}. These techniques can also be adopted to improve the personalized performance in DPFL \citep{Rong2022DisPFL,li2022learning}. However, the heterogeneity among clients exists not only in local data distribution but also in the communication power and computation resources \citep{chen2023fs,chen2023efficient,chai2019towards}. The power level of the wireless channel among clients may be different and time-varying in communication networks, and some clients may get offline occasionally without sending messages to their neighbors. These result in long-term waits or incidents of deadlock for their neighbors \citep{chen2023enhancing,yu2019linear} and also lead to poor convergence for the whole system. Besides, there is no reason to expect that the exchanged models are trained at the same convergence level due to the heterogeneous computation resources. Clients may receive excessive poor-performing models which can not help their training and degrade the personalized performance.

To tackle the challenges above, we propose a DPFL framework with a directed communication topology, termed DFedPGP, which incorporates the partial model personalization and stochastic gradient push to boost the personalized performance of the heterogeneous clients. Both partial model personalization and stochastic gradient push contribute to speeding up the convergence and reducing the communication resources to reach an ideal performance. Instead of exchanging the full model with their undirected neighbors, we decouple the model as a mixture of a shared feature representation part and a private linear classifier part and only push the shared partial gradients to the directed out-neighbors (as depicted in Figure \ref{fig:dfedpgp} ). Specifically, the proposed method consists of three steps: (1) pull the shared partial gradient and the bias weights from in-neighbors; (2) local update the personalized linear classifier and the shared feature representation alternately with the de-biased parameters; (3) push the updated shared gradients and the bias information to out-neighbors. In-neighbors and out-neighbors are the in-coming and out-coming links for each client here. Partial gradient push makes the personalized information well stored in the private linear classifier, reducing communication costs as well as protecting clients’ privacy. Moreover, directed contact allows clients to choose their neighbors flexibly, meaning that the shared part model has a larger feature search space among clients, which guarantees better performance in a computation-constrained and communication-constrained scenario.

Theoretically, we present the non-trivially convergency analysis for the DFedPGP algorithm (see Section \ref{theory}), which achieves a convergence rate of $\mathcal{O}(\frac{1}{\sqrt{T}})$  in the general non-convex setting. Empirically, we conduct extensive experiments on the CIFAR-10, CIFAR-100, and Tiny-ImageNet datasets in non-IID settings with different data partitions. Experimental results confirm that the proposed algorithm can achieve competitive performance relative to other SOTA baselines (see Section \ref{exper}) in PFL.

In summary,  our main contributions are four-fold:
\begin{itemize}
    \item We introduce the directed Push-sum optimization to PFL, which allows clients to choose their neighbors flexibly and guarantees a larger feature search space in a communication, and computation heterogeneity scenario. 
    
    \item We propose a DPFL framework DFedPGP, incorporated with stochastic gradient push and partial model personalization for robust communication and fast convergence. 
    
    \item  We provide convergence guarantees for DFedPGP in the general non-convex setting with peer-to-peer partial participation in DPFL.
    
    \item Empirical results indicate the superiority of the proposed DFedPGP compared with various SOTA baselines and it can be well adapted to the data heterogeneous and computation resources constrained settings.
\end{itemize}

\section{Related Work}

\textbf{Personalized Federated Learning (PFL).}
The PFL aims to produce the greatest personalized models for each client by model decoupling  \citep{arivazhagan2019federated, collins2021exploiting}, knowledge distillation  \citep{li2019fedmd, lin2020ensemble}, multi-task learning  \citep{huang2021personalized, shoham2019overcoming}, model interpolation  \citep{deng2020adaptive, diao2020heterofl} and clustering  \citep{ghosh2020efficient, sattler2020clustered}. More details can be referred to in  \citep{tan2022towards}. In this paper, we mainly focus on the model decoupling methods, which divide the model into a global shared part and a personalized part, also called \emph{partial personalization}. Existing partial personalized works in CFL achieve better performance than full model personalization with fewer shared parameters. FedPer \citep{arivazhagan2019federated}, FedRep \citep{collins2021exploiting} and FedBABU \citep{oh2021fedbabu} all use one global feature representation with many local classifiers but with differences in the relationship between the shared representation and the private linear parts. 
Fed-RoD \citep{chen2021bridging} simultaneously trains a global full model and many private classifiers with both class-balanced loss and empirical loss. Theoretically,
FedSim and FedAlt \citep{pillutla2022federated} provide the convergence analyses of both algorithms in the general non-convex setting, while FedAvg-P and Scaffold-P \citep{chen2023sharper} improve the existing results in \citep{pillutla2022federated}. 

\medskip
\noindent
\textbf{Decentralized Federated Learning (DFL).}
 Due to the computation and communication resources heterogeneity among clients, DFL has been an encouraging field in recent years \citep{beltran2022decentralized,kang2022blockchain, LiJunBlockchain,shi2023improving}, where clients only connect with their neighbors through peer-to-peer communication.  We discuss the PFL methods in DFL considering multi-step local iterations.
 Specifically, DFedAvgM \citep{sun2022decentralized} applies multiple local iterations with SGD and the quantization method to reduce the communication cost. Dis-PFL \citep{Rong2022DisPFL} customizes the personalized model and pruned mask for each client to speed up the personalized convergence. KD-PDFL \citep{jeong2023personalized} leverages the knowledge distillation technique to empower each device to discern statistical distances between local models. ARDM \citep{sadiev2022decentralized} presents lower bounds on the communication and local computation costs for this personalized FL formulation in a peer-to-peer manner. 

\medskip
\noindent
\textbf{Push-sum over Directed Graphs.} 
Push-sum optimizer is proposed to solve the asymmetric optimization problems over (time-varying) directed graphs. The first Push-sum study in \citep{ kempe2003gossip} discusses gossip-type problems in directed graphs. PS-DDA \citep{tsianos2012push} extends this method to a decentralized scenario and proves the convergence in a convex set. More optimization analysis can be referred to in \citep{nedic2014distributed, xi2015linear,xi2017dextra,xin2018linear,nedic2017achieving}. As an effective optimizer, Push-sum and its variants have been applied to various machine learning (ML) tasks \citep{assran2019stochastic,taheri2020quantized,chen2023enhancing,assran2020asynchronous,li2023asymmetrically}. For example, SGP \citep{assran2019stochastic} combines Push-sum with stochastic gradient updates and also proposes the Overlap SGP, allowing overlaps of communication and computation to hide communication overhead. Quantized Push-sum \citep{taheri2020quantized} quantizes the Push-sum based algorithm over directed graphs to tackle the heavy communication load. AsyNG \citep{chen2023enhancing} proposes an asynchronous DFL system with directed communication by incorporating neighbor selection and gradient push to boost the performance on non-IID local data and heterogeneous edge nodes.

\medskip
Nowadays, almost all PFL works suffer from the risk of deadlock from unstable communication channels and suboptimal convergence from the different convergence-level aggregations. Therefore, we try to propose a framework of partial gradient push based on a directed communication graph for DPFL. It differs from the existing directed DFL methods in the exchange model part like OSGP\citep{assran2019stochastic}, where clients focus on the whole parameters exchange for the only consensus model. Also, we adopt multi-step local steps and multiple alternate optimizations for better convergence, which leads to an unbiased gradient estimation and the dependent stochastic variance between the shared parts and the personal parts. Therefore, the algorithm design and the theoretical analysis are both unique and non-trivial. 

\section{Methodology}\label{algo}

In this section, we first define decentralized partial personalized models and the directed graph network in DPFL. Then we present the DFedPGP, which leverages the partial gradient push in the directed graph to mitigate the negative impact of heterogeneous data and computation resources. 

\subsection{Problem Setup} 

\textbf{Decenntralized Personalized Federated Learning.} 
Consider a typical setting of DFL with~$m$ clients, where each client~$i$ has the data distribution $\mathcal{D}_i$. We focus on the minimization of the finite sum of non-convex functions:
\begin{equation}\label{dec}
\begin{split}
    &\small \min_{w\in \mathbb{R}^d} F(w):  =\frac{1}{m}\sum_{i=1}^m F_i(w_i), \\
    &F_i(w_i) =\mathbb{E}_{\xi\sim \mathcal{D}_i} F_i(w_i;\xi).
\end{split}
\end{equation}
where $F:\mathbb{R}^d \to \mathbb{R}$ is the global object function; $w_i \in\mathbb{R}^d$ represents the parameters of the machine learning model in client $i$;  $F_i$ is the loss function associated with the data sample $\xi$ randomly drawn from the distribution $\mathcal{D}_i$ in client~$i$.

To relieve the communication burden and improve personalized performance, we consider the partial model personalized version in DPFL. Specifically, the model parameters are partitioned into two parts: 
the \emph{shared} parameters $u\in\mathbb{R}^{d_0}$ and the \emph{personal}
parameters $v_i\in\mathbb{R}^{d_i}$ for $i=1,\ldots,m$. 
The full model on client~$i$ is denoted as $w_i=(u_i,v_i)$. To simplify presentation, we denote $V=(v_1,\ldots,v_m)\in\mathbb{R}^{d_1+\ldots+d_m}$, and then our goal is to solve this problem:
\begin{equation}\label{eqn:partial PFL in DFL}
\begin{split}
    & \min_{u, V} \quad F(u, V):=\frac{1}{m} \sum_{i=1}^m F_i\left(u, v_i\right),\\
    & F_i\left(u_i, v_i\right)=  {\E}_{\xi \sim \mathcal{D}_i}\left[F_i\left(u_i, v_i; \xi\right)\right].
\end{split}
\end{equation}
where $u$ denotes the consensus model averaged with  $u_i$, i.e., $u = \frac{1}{m}\sum_{i=1}^m u_i$ and we use ${\nabla}_u$ and ${\nabla}_v$ to represent stochastic gradients with respect to~$u_i$ and $v_i$, respectively.

\medskip
\noindent
\textbf{Directed Graph Network.} In the decentralized network topology, the communication between clients can be modeled as a directed connected graph $\mathcal{G}(t) = (\mathcal{N},\mathcal{V}(t),\mathcal{E}(t) )$, where $\mathcal{N} = \{1, 2, \ldots, m\}$ represents the set of clients, $\mathcal{V}(t) \subseteq  \mathcal{N} \times  \mathcal{N}$ represents the set of communication channels and $(i, j) \in  \mathcal{E}(t)$ represents a directed link from client $i$ to client $j$. Considering the time-varying directed graph, the link $(i, j) \in  \mathcal{E}(t) $ (where $i \neq j $) does not imply the link $(j, i) \in \mathcal{E}(t)$. To further describe the directed communication, we define $N_i^{in} = \{ j |(j, i)\in \mathcal{E}(t), j \in \mathcal{N}\}$ as the in-neighbor set and $N_i^{out} = \{j |(i, j) \in \mathcal{E}(t), j \in \mathcal{N}\}$ as the out-neighbor set, which are the sets with in-coming and out-coming links into node $i$ separately.

Most works in DPFL assume the communication is based on a time-varying undirected graph, which satisfies $N_i^{in} = N_i^{out}$ and the link $(i, j) \in  \mathcal{E}(t) $ (where $i \neq j $) must be equal to the link $(j, i) \in \mathcal{E}(t)$. But in reality, the undirected communication graph requires high attention in the implementation to avoid deadlocks. Directed communication graph networks mitigate this issue by flexibly selecting neighbors within clients and exhibiting higher robustness in terms of network communication quality.

\subsection{Algorithm}

In this section, DFedPGP (see Algorithm \ref{DFedPGP}) is proposed to solve the problem (\ref{eqn:partial PFL in DFL}) in a fully decentralized manner.



\medskip
\noindent
\textbf{Partial Model Personalization.} Drawing from previous research on CNNs, layers that serve specific engineering purposes: lower convolution layers (close to the input) are responsible for feature extraction, and the upper linear layers (close to the output) focus on complex pattern recognition \citep{pillutla2022federated}. The feature extraction layers, mapping data from high-dimensional feature space to an easily distinguished low space, are similar between clients but prone to over-fitting. The linear classification layers, which determine the data category from the output of the previous feature extraction layers, are very different from data heterogeneity clients \citep{li2023effectiveness}. Therefore, we set the feature extraction layers as the shared parts and the linear classification layers as the personalized parts as \citep{arivazhagan2019federated,collins2021exploiting,oh2021fedbabu,pillutla2022federated}, and we leverage the alternating update approach for model training in Line 5-12, which aims to increase the compatibility between the personalized and the shared parts.

\medskip
\noindent
\textbf{Push-sum Based DFedPGP.} The Push-sum method \citep{nedic2016stochastic} to solve the decentralized optimization problem performs one local stochastic gradient descent update with one iteration of push-pull transmission at each client. It maintains four variables locally at each round $t$: the biased shared model parts parameters $u_i^t$, the private model parts parameters $v_i^t$, the Push-sum bias weight $\mu_i^t$, and the de-biased shared model parts parameters $z_i^t = u_i^t / \mu_i^t$.  To save the overall communication, we introduce an idea from local SGD to perform a few epochs of local training before weights transmission.  So at each round, every client performs a few local SGD steps in Lines 5–12 followed by one step of push-pull transmission in Lines 14–17. Notably, the local gradient is calculated at the de-biased parameters $z_i^t$ in line 6 and they are then used to be updated in Line 10. The push-pull transmission includes the biased shared model parameters $u_i^t$ and the Push-sum bias weight $\mu_i^t$.

\medskip
\noindent
\textbf{Directed Communication Graph.} 
We set the mixing matrix $P^t$ to describe the communication topology at each round $t$. DFedPGP can be adapted to various communication topologies such as time-varying, asymmetric, and sparse networks. We used the time-varying, asymmetric network here to encounter the limited communication bandwidth. Clients only need to know the outgoing mixing weights at each communication round and can independently choose the mixing weights from the other clients in the network. In this work, we introduce a simple yet effective random client selection method that satisfies our theory (Section \ref{theory}) and the limited communication bandwidth in the experiments (Section \ref{exper}). 

\begin{algorithm}[t]
\small
\caption{DFedPGP}
\label{DFedPGP}
\SetKwData{Left}{left}\SetKwData{This}{this}\SetKwData{Up}{up} \SetKwFunction{Union}{Union}\SetKwFunction{FindCompress}{FindCompress}
\SetKwInOut{Input}{Input}\SetKwInOut{Output}{Output}

\Input{Total number of devices $m$, total number of communication rounds $T$,  local learning rate $\eta_{u}$ and $\eta_{v}$, total number of local iterates $K_{u}$ and $K_{v}$ and mixing weight $p^{t}_{j,i} = {1/ \vert {\mathcal{N}^{out}_{i,0}} \vert} $ .}

\Output{Personalized model $  {u}^T_{i} = z^T_{i}$ and $  {v}^T_{i}$ after the final communication of all clients.}

\textbf{Initialization:}  Randomly initialize each device's shared parameters $  {u}^{0}_{i}$, the de-biased shared parameters $ z^{0}_{i} = {u}^{0}_{i} $ , personal parameters $  {v}^{0}_{i}$ and push-sum weight $\mu^{0}_{i} = 1 $.

\For{$t=0$ \KwTo $T-1$}{
    \For{client $i$ in parallel }{
        Set $  {u}^{t,0}_{i} \gets    {u}^t_{i}$ and sample a batch of local data $\xi_i$ and calculate local gradient iteration.\\
        \For{$k=0$ \KwTo $K_{v}-1$ }{  
        Perform personal parameters $  {v}_i$ update: 
        ${  {v}_{i}^{t, k + 1}} = {  {v}_{i}^{t, k}} - {\eta_v}{\nabla_v}F_i({z}^{ t,0}_{i},{  {v}_{i}^{t,k}};{\xi _i})$.}

        $  {v}_{i}^{t + 1} \gets   {v}_{i}^{t, K_v}$. 
        
        \For{$k=0$ \KwTo $K_{u}-1$ }
            {Update shared parameters $  {u}_i$ via ${  {u}_{i}^{t, k + 1}} =   {u}_{i}^{t, k} - {\eta _u}\nabla_u F_i(z_{i}^{t, k}, {v}_{i}^{t + 1};\xi_i)$. 
            
            ${z}_{i}^{t,k+1} = {u}_{i}^{t,k+1}/ \mu^{t}_{i}$.}

        $  {u}_{i}^{t + 1/2} \gets   {u}_{i}^{t, K_u}$.

        Push weights $p_{j,i}^{t}  {u}^{t+\frac{1}{2}}_{i}$ and bias information $p_{j,i}^{t}  {\mu}^{t}_{i}$ to clients $ j\in \mathcal{N}^{out}_{i,t}$.
        
        Pull weights $p_{i,j}^{t}  {u}^{t+\frac{1}{2}}_{j}$ and bias information $p_{i,j}^{t}  {\mu}^{t}_{j}$ from clients $ j  \in \mathcal{N}^{in}_{i,t}$.

        P2P updating by $ {u}^{t+1}_{i} = \sum_{j \in \mathcal{N}^{in}_{i,t} } p_{i,j}^{t}  {u}^{t+\frac{1}{2}}_{j} $ and ${\mu}^{t+1}_{i} = \sum_{j \in \mathcal{N}^{in}_{i,t} } p_{i,j}^{t}  {\mu}^{t}_{j} $.
        
        De-bias the updated model by  $z_{i}^{t+1} = u_{i}^{t+1}/\mu^{t+1}_{i} $.
     }
}
\end{algorithm}

\section{Theoretical Analysis}\label{theory}

In this section, we provide a detailed convergence theorem for the proposed algorithm DFedPGP and explore how the partial personalization and gradient push work. The detailed derivation process can be found in the appendix \ref{ap:proof}. 

\subsection{Assumption}

\begin{assumption}[$\mathcal{B}$-bounded Connectivity \citep{chen2023enhancing}\label{assmp:mixing connectivity}]
The time-varying graph (i.e., the communication topology) is B-bounded strongly connected to ensure the convergence of model training \citep{assran2019stochastic}. There exists a window size  $\mathcal{B} \geq 1$ 1 such that the graph union $ \bigcup_{k=l}^{l+ \mathcal{B}-1} \mathcal{G}(k)(l=0,1,2, \cdots)$ is strongly connected. Note that if $ \mathcal{B} = 1$, each instance of graph $ \mathcal{G}(k)$ is strongly connected at global iteration $ k $.
\end{assumption}

\begin{assumption}[Smoothness \citep{pillutla2022federated}]\label{assmp:smoothness}
For each client $i=\{1,\ldots,m\}$, the function $F_i$ is continuously differentiable.  
There exist constants $L_u, L_v, L_{uv}, L_{vu}$ such that
for each client $i=\{1,\ldots,m\}$:
\begin{itemize}[topsep=0pt,itemsep=0pt,leftmargin=\widthof{(a)}]
\item $\nabla_u F_i(u_i,v_i)$ is $L_u$--Lipschitz with respect to~$u_i$ and $L_{uv}$--Lipschitz with respect to~$v_i$
\item $\nabla_v F_i(u_i,v_i)$ is $L_v$--Lipschitz with respect to~$v_i$ and $L_{vu}$--Lipschitz with respect to~$u_i$.
\end{itemize}
 
We summarize the relative cross-sensitivity of $\nabla_u F_i$ with respect to~$v_i$ and $\nabla_v F_i$ with respect to~$u$ with 
the scalar
\begin{equation}\label{eqn:chi-def}
\small
\chi := \max\{L_{uv},\,L_{vu}\}\big/\sqrt{L_u L_v}.\nonumber
\end{equation}
\end{assumption}

\begin{assumption}[Bounded Variance \citep{pillutla2022federated}] \label{assmp:stoc-grad-var}
The stochastic gradients in Algorithm \ref{DFedPGP} have bounded variance. That is, for all $u_i$ and $v_i$, 
there exist constants $\sigma_u$ and $\sigma_v$ such that
\begin{align*} 
\small
    \E\bigl[\bigl\|\nabla_u F_i(u_i, v_i; \xi_i) - \nabla_u F_i(u_i, v_i)\bigr\|^2\bigr] &\le \sigma_u^2, \\
    \E\bigl[\bigl\|\nabla_v F_i(u_i, v_i; \xi_i) - \nabla_v F_i(u_i, v_i)\bigr\|^2\bigr] &\le \sigma_v^2.
\end{align*}
\end{assumption}

\begin{assumption}[Partial Gradient Diversity \citep{pillutla2022federated}]
\label{assmp:grad-diversity.}
There exist a constant $\sigma^2_g$ such that
\[
\small
\textstyle
    \|\nabla_u F_i(u_i, v_i) - \nabla_u F(u_i, V)\bigr\|^2
    \le \sigma_g^2,~ \forall u_i,~V.
\]
\end{assumption}


Assumption \ref{assmp:mixing connectivity} is commonly adopted in gradient push work \citep{chen2023enhancing,li2023asymmetrically,nedic2014distributed}: it is considerably weaker than requiring each $\mathcal{G}(t)$ be connected for it allows each client to connect in time-varying and directed topologies.
Assumption \ref{assmp:smoothness}--\ref{assmp:grad-diversity.} are mild and commonly used in characterizing the convergence rate of FL \citep{li2023dfedadmm,shi2023towards,karimireddy2020scaffold,sun2022decentralized}.

\subsection{Challenge and Proof}

\textbf{Challenges of Convergence Analysis.} 
(1) Compared to the classical Push-sum based method like SGP\citep{assran2019stochastic}, $u_{i}^{t,k}-u_{i}^{t,0}$ after multiple local iterations and alternately update is not an unbiased estimate of $\nabla F_i(u_i^t)$. Multiple local iteration analyses are non-trivial; 
(2) In contrast to the symmetric topology, DFedPGP communicates with its clients based on an asymmetric topology, resulting in $\sum_{j}p_{i,j}^t \neq 1$. As a consequence, each client needs to maintain a Push-sum weight $\mu_i^t$ to de-bias the model parameters;
(3) To realize better-personalized performance, we need to analyze the convergence in a partial model personalized way, where the shared part $u$ is updated with gradient pushing and pulling while the personalized part $v$ is updated with local SGD separately. Now, we present the rigorous convergence analysis of DFedPGP as follows.


\begin{theorem}\label{th:theorem}
Under Assumptions 1-5, the local learning rates satisfy $0<\eta_u<\frac{\delta}{4\sqrt{2}L_uK_u}$, $F^{*}$ is denoted as the minimal value of $F$, i.e., $F(\bar{u}, V)\ge F^*$ for all $\bar{u} \in \mathbb{R}^{d}$, and $V=(v_1,\ldots,v_m)\in\mathbb{R}^{d_1+\ldots+d_m}$. Let $\bar{u}^t = \frac{1}{m}\sum_{i=1}^m u^t_i$ and denote $\Delta_{\bar{u}}^t$ and $\Delta_{v}^t$ as:
\begin{center}
\small
    $\Delta_{\bar{u}}^t = \bigl \| \nabla_u F(\bar{u}^t, V^{t+1}) \bigr\|^2 \,, ~~~
        \Delta_v^t = \frac{1}{m}\sum_{i=1}^m \bigl\|\nabla_v F_i(u_i^t, v_i^t)\bigr\|^2 \,.$
\end{center}
Therefore, we have the convergence analysis below:
\begin{equation}\label{th:theorem_dfeddgp}
\small
    \begin{split}
        &\frac{1}{T}\sum_{i=1}^T \bigl(\frac{1}{L_u} \E \bigl[\Delta_{\bar{u}}^t \bigr] + \frac{1}{L_v} \E [\Delta_{v}^t \bigr] \bigr) 
        \leq \mathcal{O}\Big(\frac{F(\bar{u}^1, V^1) - F^*}{\sqrt{T}}  \\
        & +\frac{ (1+L_v)\sigma_v^2 }{\sqrt{T}} 
        + \big(\sigma_u^2 + \sigma_g^2 \big)
        \big( \frac{ C^2}{(1-q)^2 L_u T} 
        +  \frac{1}{K_u L_u \sqrt{T}} \\
        &   + \frac{1}{K_u L_u \delta^2 T^{3/2}}
        + \frac{ C^2}{K_u L_u (1-q)^2 T^{3/2}}
        +\frac{L_{vu}^2 C^2 }{(1-q)^2 L_u^2 \sqrt{T}} \big).
    \end{split}
\end{equation}
\end{theorem}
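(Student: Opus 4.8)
The plan is to establish separate one-step descent inequalities for the shared block $\bar{u}^t$ and the personal blocks $v_i^t$, then sum them, telescope over $t=1,\ldots,T$, and tune $\eta_u,\eta_v$ to extract the $\mathcal{O}(1/\sqrt{T})$ rate. The foundational observation is that the push-pull mixing matrix $P^t$ is column-stochastic, so $\sum_i p_{i,j}^t = 1$ and hence $\bar{u}^{t+1}=\bar{u}^t - \frac{\eta_u}{m}\sum_i\sum_{k=0}^{K_u-1}\nabla_u F_i(z_i^{t,k},v_i^{t+1};\xi_i)$: the average of the shared parameters is exactly preserved through communication even though individual clients are biased. This lets me treat $\bar{u}^t$ as if it followed an inexact centralized gradient step, with gradients evaluated at the de-biased local iterates $z_i^{t,k}$ rather than at $\bar{u}^t$.

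First I would apply $L_u$-smoothness of $F$ in $u$ to bound $F(\bar{u}^{t+1},V^{t+1})$ against $F(\bar{u}^t,V^{t+1})$, and separately apply $L_v$-smoothness in $v$ together with the $K_v$-step SGD recursion for each $v_i$ to bound $F(\bar{u}^t,V^{t+1})$ against $F(\bar{u}^t,V^t)$. Chaining the two yields a descent for the full objective whose negative terms are proportional to $\Delta_{\bar{u}}^t$ and $\Delta_v^t$. The alternating update forces me to carry the cross-Lipschitz constants: replacing $\nabla_u F_i(z_i^{t,k},v_i^{t+1})$ by $\nabla_u F(\bar{u}^t,V^{t+1})$ incurs error terms controlled by $L_u\|z_i^{t,k}-\bar{u}^t\|$ (consensus plus local drift) and by $L_{vu}\|v_i^{t+1}-v_i^t\|$ (the coupling that produces the $L_{vu}^2 C^2$ term in the final bound). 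Taking expectations turns stochastic gradients into true gradients plus variance contributions bounded by $\sigma_u^2,\sigma_v^2$ via Assumption 4, and the gradient-diversity Assumption 5 converts per-client gradients into the global gradient at the cost of $\sigma_g^2$.

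The main obstacle is controlling the consensus error $\frac{1}{m}\sum_i\E\|z_i^t-\bar{u}^t\|^2$ for the push-sum scheme. Because the topology is directed and time-varying, I cannot rely on a single symmetric doubly-stochastic matrix; instead I would invoke the $\mathcal{B}$-bounded connectivity of Assumption 1 to show that the product of mixing matrices over any window of length $\mathcal{B}$ contracts toward its Perron vector at a geometric rate $q<1$, producing the constants $C$ and $q$ that appear in the theorem. A delicate point is the de-biasing $z_i^t=u_i^t/\mu_i^t$: I must first bound the push-sum weights $\mu_i^t$ away from zero, which is where the constant $\delta$ (a uniform lower bound on the limiting Perron entries) enters and dictates the learning-rate ceiling $\eta_u<\delta/(4\sqrt{2}L_uK_u)$. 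I would then set up a perturbed linear recursion for the stacked deviations driven by the accumulated local gradient increments and bound it by a geometric-sum argument, obtaining a consensus bound of order $\eta_u^2 K_u^2(\sigma_u^2+\sigma_g^2)\,C^2/(1-q)^2$.

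Finally, I would bound the local drift $\|z_i^{t,k}-z_i^{t,0}\|$ across the $K_u$ inner steps by unrolling the SGD recursion (giving an $\eta_u^2 K_u$-order term), fold the consensus and drift bounds back into the chained descent inequality, and telescope the function values from $t=1$ to $T$. Choosing $\eta_u,\eta_v=\Theta(1/\sqrt{T})$ subject to the stated ceiling balances the $F(\bar{u}^1,V^1)-F^*$ term against the accumulated variance and consensus terms; collecting the coefficients multiplying $\sigma_u^2+\sigma_g^2$ reproduces the five-term bracket in \eqref{th:theorem_dfeddgp}, and a smaller $1-q$ (tighter connectivity) directly shrinks the $C^2/(1-q)^2$ contributions, which is the claimed acceleration from better-connected topologies.
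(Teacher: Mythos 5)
Your overall route coincides with the paper's: the same splitting of $F(\bar{u}^{t+1},V^{t+1})-F(\bar{u}^{t},V^{t})$ into a $v$-step and a $u$-step, the same $K_v$-step SGD recursion for the personal blocks (the paper's Lemma~\ref{le:v}), the same $K_u$-step local-drift bound (Lemma~\ref{lemma:local-update}), the same push-sum consensus contraction with constants $C$ and $q$ under the $\mathcal{B}$-bounded connectivity assumption, the same use of $\mu_i^t \ge \delta$ to control the de-biased iterates $z_i^t=u_i^t/\mu_i^t$ (which is exactly where the ceiling $\eta_u<\delta/(4\sqrt{2}L_uK_u)$ comes from), and the same telescoping with $\eta_u,\eta_v=\Theta(1/\sqrt{T})$. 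Your foundational observation that $\bar{u}^{t+1}=\bar{u}^t-\frac{\eta_u}{m}\sum_i\sum_{k}\nabla_u F_i(z_i^{t,k},v_i^{t+1};\xi_i)$ is also what the paper's $u$-step analysis implicitly uses, consistent with the column-stochastic push-sum setting of SGP that it builds on.

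There is, however, one concrete misstep: your attribution of the $L_{vu}^2C^2/((1-q)^2L_u^2\sqrt{T})$ term. You claim it arises in the $u$-step from replacing $\nabla_u F_i(z_i^{t,k},v_i^{t+1})$ by $\nabla_u F(\bar{u}^t,V^{t+1})$ at cost $L_{vu}\|v_i^{t+1}-v_i^t\|$. This cannot work for two reasons. First, in the $u$-step both the local stochastic gradients and the reference gradient are evaluated at the \emph{same} personal parameters $V^{t+1}$, so no $v$-mismatch term appears there at all; and even if your decomposition produced one, the sensitivity of $\nabla_u F_i$ to $v$ is governed by $L_{uv}$, not $L_{vu}$. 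Second, $\|v_i^{t+1}-v_i^t\|$ is controlled by the $v$-drift lemma, yielding a term of size $\eta_v^2K_v^2(\Delta_v^t+\sigma_v^2)$ that carries no $C^2/(1-q)^2$ factor, so your mechanism cannot reproduce the stated coefficient. In the paper the cross term instead appears in the $v$-step: one compares $\nabla_v F_i(\bar{u}^t,v_i^t)$ with $\nabla_v F_i(z_i^t,v_i^t)$, incurring $L_{vu}\|\bar{u}^t-z_i^t\|$, and it is the consensus lemma applied to $\frac{1}{m}\sum_i\E\|\bar{u}^t-z_i^t\|^2$ that converts this into $\frac{L_{vu}^2}{2}(\sigma_u^2+\sigma_g^2+\E[\Delta_{\bar{u}}^t])\cdot\mathcal{O}(K_u^2\eta_u^2C^2/(1-q)^2)$. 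Since you already have all the needed machinery (the consensus bound and the cross-Lipschitz constants), this is a misplacement rather than a missing idea, but executed literally your plan would fail to produce the $L_{vu}^2$ term in the theorem and would leave the $v$-step descent uncontrolled, because the $v$-gradient used for descent is taken at the local $z_i^t$ while $\Delta_v^t$ and the descent direction are anchored at $\bar{u}^t$.
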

The parameters $C$ and $q$ are related to the communication topology as \citep[Lemma 3]{assran2019stochastic}. $\delta$ is the minimum sum of any row elements in the matrix $\prod_{i=1}^t \mathcal{G}(i)$ for all $t\geq 0$ as \citep[Proposition 2.1]{taheri2020quantized}. With the proper step sizes, we have the following corollary.

\begin{corollary}[Convergence Rate for DFedPGP]\label{th:corollary}
    Under Theorem \ref{th:theorem} and by setting the local learning rates 
     $\eta_u=\mathcal{O}({1}/{L_uK_u\sqrt{T}}), \eta_v=\mathcal{O}({1}/{L_vK_v\sqrt{T}})$, it holds that: 
    \begin{equation} 
    \small
    \begin{split}
    & \frac{1}{T}\sum_{i=1}^T \bigl(\frac{1}{L_u} \E \bigl[\Delta_{\bar{u}}^t \bigr] 
    +  \frac{1}{L_v} \E [\Delta_{v}^t \bigr] \bigr) \\
    & \leq \mathcal{O}\Big(\frac{F(\bar{u}^1, V^1) - F^* +\sigma_1^2}{\sqrt{T}} 
    + \frac{\sigma_2^2}{T} 
    + \frac{\sigma_3^2}{\sqrt{T^3}} \Big),
    \end{split}
    \end{equation}
    where     
    \begin{equation}         \small
    \begin{split}
        \sigma_1^2 & = (1+L_v)\sigma_v^2
        + \big(\frac{1}{K_u L_u} 
        + \frac{ L_v \chi^2 C^2}{(1-q)^2 L_u} \big) 
        \big(\sigma_u^2 + \sigma_g^2\big) , \\
        \sigma_2^2 &= \frac{C^2}{(1-q)^2 L_u  }
         \big(\sigma_u^2 + \sigma_g^2\big) , \\
        \sigma_3^2 &= \big( \frac{1}{K_u L_u \delta^2}
        + \frac{ C^2}{(1-q)^2 K_u L_u} \big) \big(\sigma_u^2 + \sigma_g^2 \big). 
    \end{split}
    \end{equation}
\end{corollary}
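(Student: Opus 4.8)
The plan is to obtain Corollary~\ref{th:corollary} as a direct specialization of Theorem~\ref{th:theorem}: I would insert the prescribed step sizes $\eta_u=\mathcal{O}(1/(L_uK_u\sqrt{T}))$ and $\eta_v=\mathcal{O}(1/(L_vK_v\sqrt{T}))$ into the bound \eqref{th:theorem_dfeddgp}, check that this choice is admissible, and then regroup the right-hand side by the order of $T$, reading off the coefficients of $1/\sqrt{T}$, $1/T$ and $1/T^{3/2}$ as the constants $\sigma_1^2$, $\sigma_2^2$ and $\sigma_3^2$ respectively. Since the displayed bound in Theorem~\ref{th:theorem} already exhibits the explicit $T$-dependence produced by exactly this schedule, the corollary amounts to a careful bookkeeping of constants rather than a new estimate.

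First I would verify admissibility. The theorem requires $0<\eta_u<\delta/(4\sqrt{2}L_uK_u)$; writing $\eta_u=c/(L_uK_u\sqrt{T})$ for an absolute constant $c$, the constraint reduces to $c/\sqrt{T}<\delta/(4\sqrt{2})$, i.e. $T>32c^2/\delta^2$, which holds for all sufficiently large $T$ and is harmless for an asymptotic rate. The choice of $\eta_v$ only has to respect the smoothness-based step-size restriction underlying the $v$-block descent, which the $1/(L_vK_v\sqrt{T})$ scaling satisfies for large $T$ as well.

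Next I would sort the five parenthesised terms in \eqref{th:theorem_dfeddgp} together with the two leading $1/\sqrt{T}$ terms by their power of $T$. The terms $\frac{F(\bar u^1,V^1)-F^*}{\sqrt T}$, $\frac{(1+L_v)\sigma_v^2}{\sqrt T}$, $\frac{\sigma_u^2+\sigma_g^2}{K_uL_u\sqrt T}$ and $\frac{L_{vu}^2C^2(\sigma_u^2+\sigma_g^2)}{(1-q)^2L_u^2\sqrt T}$ all carry $1/\sqrt T$ and assemble into the first summand with $\sigma_1^2=(1+L_v)\sigma_v^2+\big(\frac{1}{K_uL_u}+\frac{L_v\chi^2C^2}{(1-q)^2L_u}\big)(\sigma_u^2+\sigma_g^2)$; the single term $\frac{C^2(\sigma_u^2+\sigma_g^2)}{(1-q)^2L_uT}$ gives $\sigma_2^2/T$; and the two remaining terms $\frac{\sigma_u^2+\sigma_g^2}{K_uL_u\delta^2T^{3/2}}$ and $\frac{C^2(\sigma_u^2+\sigma_g^2)}{K_uL_u(1-q)^2T^{3/2}}$ combine into $\sigma_3^2/T^{3/2}$.

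The one step that is not pure collection is rewriting the cross-sensitivity factor via $\chi$. Using the definition $\chi^2=\max\{L_{uv}^2,L_{vu}^2\}/(L_uL_v)$, I would bound $L_{vu}^2/L_u^2\le \max\{L_{uv}^2,L_{vu}^2\}/L_u^2 = L_v\chi^2/L_u$, so that $\frac{L_{vu}^2C^2}{(1-q)^2L_u^2}\le\frac{L_v\chi^2C^2}{(1-q)^2L_u}$, which turns the $L_{vu}$-dependent $1/\sqrt T$ term into the symmetric $\chi$ form appearing in $\sigma_1^2$. The main thing to watch is that this upper bound, together with the absorption of the hidden $\mathcal{O}$-constants from the theorem, neither introduces extra $T$-dependence nor alters the stated order; once that is confirmed the three blocks collapse to the claimed $\mathcal{O}(\sigma_1^2/\sqrt T+\sigma_2^2/T+\sigma_3^2/\sqrt{T^3})$ bound. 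I expect the only real obstacle to be this constant-tracking—particularly checking that the $\chi$-substitution is the intended packaging and that no term is double-counted across the three $T$-orders.
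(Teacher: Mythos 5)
Your proposal is correct and coincides with the paper's own proof: the corollary is obtained exactly as you describe, by specializing Theorem~\ref{th:theorem} to the step sizes $\eta_u=\mathcal{O}(1/(L_uK_u\sqrt{T}))$, $\eta_v=\mathcal{O}(1/(L_vK_v\sqrt{T}))$, grouping the terms of \eqref{th:theorem_dfeddgp} by their power of $T$ into $\sigma_1^2/\sqrt{T}$, $\sigma_2^2/T$, $\sigma_3^2/T^{3/2}$, and repackaging the cross term via $L_{vu}^2/L_u^2\le\max\{L_{uv}^2,L_{vu}^2\}/L_u^2=L_v\chi^2/L_u$ from the definition of $\chi$. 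Your additional admissibility check ($T\gtrsim c^2/\delta^2$ so that $\eta_u<\delta/(4\sqrt{2}L_uK_u)$) is sound and is left implicit in the paper.
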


\begin{table*}[ht]
\centering
\small
\caption{ \small  Test accuracy (\%) on CIFAR-10 \& 100 in both Dirichlet and Pathological distribution settings.}
\label{ta:all_baselines}
\begin{tabular}{lcccc|cccc} 
\toprule
\multirow{3}{*}{Algorithm} & \multicolumn{4}{c|}{CIFAR-10}                                                                                 & \multicolumn{4}{c}{CIFAR-100}                                                                             \\ 
\cmidrule{2-9}
                           & \multicolumn{2}{c}{Dirichlet}                            & \multicolumn{2}{c|}{Pathological}                  & \multicolumn{2}{c}{Dirichlet}                       & \multicolumn{2}{c}{Pathological}                    \\ 
\cmidrule{2-9}
                           & $\alpha$ = 0.1               & $\alpha$ = 0.3            & c = 2                    & c = 5                   & $\alpha$ = 0.1          & $\alpha$ = 0.3            & c = 5                   & c = 10                    \\ 
\midrule
Local                      & $78.96_\pm.42$               & $63.20_\pm.28$            & $85.16_\pm.18 $          & $68.56_\pm.35$          & $39.38_\pm.33$          & $22.59_\pm.49$            & $ 71.34_\pm.46 $        & $53.15_\pm.31 $           \\
FedAvg                     & $84.17_\pm.28$               & $79.66_\pm.20$            & $85.04_\pm.11$           & $81.18_\pm.27$          & $57.43_\pm.03$          & $55.06_\pm.06 $           & $ 69.05_\pm.43 $        & $66.37_\pm.48 $           \\
FedPer                     & $88.57_\pm.09$               & $84.06_\pm.29$            & $90.94_\pm.24$           & $86.97_\pm.35$ & $54.23_\pm.14$          & $34.07_\pm.76$            & $ 78.48_\pm.93 $        & $ 70.38_\pm.02 $          \\
FedRep                     & $88.78_\pm.40$               & $84.50_\pm.05$            & $91.09_\pm.12 $          & $ 86.22_\pm.51$         & $ 44.02_\pm.98 $          & $26.88_\pm.49$            & $78.77_\pm.19$          & $67.65_\pm.43 $           \\
FedBABU                    & $87.79_\pm.53$               & $83.26_\pm.09$            & $\textbf{91.28}_\pm.15 $ & $ 83.90_\pm.24 $        & $60.23_\pm.07 $         & $52.37_\pm.82$            & $ 77.50_\pm.33 $        & $69.81_\pm.12 $           \\
Ditto                      & $80.22_\pm.10$               & $73.51_\pm.04$            & $84.96_\pm.40 $          & $ 75.59_\pm.32 $        & $48.85_\pm.54 $         & $48.65_\pm.50 $           & $69.48_\pm.45 $         & $ 60.77_\pm.30 $          \\ 
\midrule
DFedAvgM                   & $86.94_\pm.62$               & $82.49_\pm.57$            & $90.23_\pm.97$           & $85.26_\pm.47$          & $58.80_\pm.82 $         & $54.89_\pm.77$            & $75.89_\pm.65$          & $70.55_\pm.44$            \\
OSGP                       & $87.39_\pm.13$               & $ 83.14_\pm.18 $          & $90.72_\pm.08 $          & $84.69_\pm.25 $         &$59.76_\pm.69 $         & $ 54.98_\pm.48 $          & $76.70_\pm.59 $         & $ 71.08_\pm.52 $          \\
Dis-PFL                    & $87.77_\pm.46 $              & $82.71_\pm.28$            & $88.19_\pm.47$           & $84.18_\pm.61$          &$56.06_\pm.20$           & $46.65_\pm.18 $           & $71.79_\pm.42 $         & $65.35_\pm.10 $           \\ 
\midrule
DFedPGP                    & $\textbf{88.85}_\pm.21$      & $ \textbf{85.61}_\pm.05$  & $91.26_\pm.05$           & $\textbf{87.12}_\pm.37$          & $\textbf{66.26}_\pm.25$    & $\textbf{ 57.66}_\pm.42$     & $\textbf{78.78}_\pm.41$   & $\textbf{72.19}_\pm.21$  \\
\bottomrule
\end{tabular}
\vspace{-0.3cm}
\end{table*}

\begin{remark}
    Corollary \ref{th:corollary} provides explicit insights into how various key factors affect the convergence of DFedPGP. Specifically, the convergence analysis illustrates that the large values of the gradient variance $\sigma_u^2$, $\sigma_v^2$, $\sigma_g^2$ and gradient bounded $B$ lead to slower convergence.  It also shows that more local update steps $K_u$ accelerate the convergence, quantitatively justifying the benefit of exploiting more local updates in the algorithm. Also, the smoothness of local loss functions such as $L_u$, $L_v$, and $L_{vu}$, have a significant influence on the convergence bound.
\end{remark}

\begin{remark}
    As the definition in \citep{assran2019stochastic}, the $q$ in Corollary \ref{th:corollary} can be explicitly expressed as $q = (1-a^{\Delta \mathcal{B}})^{\frac{1}{\Delta \mathcal{B}+1}}$, where $\Delta$ is the diameter of the communication network, $\mathcal{B}$ is the same defined in Assumption \ref{assmp:mixing connectivity}, and $a<1$ is a constant. Note that the bound will be tighter as $q$ decreases, which means the network connectivity improves and clients exchange parameters with more neighbors in the communication progress. Moreover, the connectivity constant $C$ decreases as the connectivity of the communication network improves, which leads to the same conclusion as $q$. More details about the communication constant can be found in Lemma \ref{th:lemma3}.
\end{remark}

\begin{remark}
    From Corollary \ref{th:corollary}, the proposed DFedPGP has a convergence rate of $\mathcal{O}\left(\frac{1}{\sqrt{T}}\right)$.  
    This result is consistent with the convergence rate achieved by \citep{pillutla2022federated,shi2023towards} in PFL. Moreover, when the smoothness of the shared parameters is not good, it means that $L_u$ is large, the term $\mathcal{O}(\frac{1}{T}+\frac{1}{\sqrt{T^3}})$ can be neglected compared to $\mathcal{O}(\frac{1}{\sqrt{T}})$. 

\end{remark}

\section{Experiments}\label{exper}

In this section, we conduct extensive experiments to verify the effectiveness of the proposed DFedPGP in data heterogeneity and computation resources heterogeneity scenarios. Below, we first introduce the experimental setup.

\subsection{Experiment Setup}

\textbf{Dataset and Data Partition.} We evaluate the performance on CIFAR-10, CIFAR-100  \citep{krizhevsky2009learning}, and Tiny-ImageNet  \citep{le2015tiny} datasets in the Dirichlet distribution and Pathological distribution, where CIFAR-10 and CIFAR-100 are two real-life image classification datasets with total 10 and 100 classes. Experiments on the Tiny-ImageNet dataset are placed in \textbf{Appendix} \ref{exper:tiny} due to the limited space. We partition the training and testing data according to the same Dirichlet distribution Dir($\alpha$) such as $\alpha =0.1$ and $\alpha =0.3$ for each client. The smaller the $\alpha$ is, the more heterogeneous the setting is. Meanwhile, for each client, we sample 2 and 5 classes from a total of 10 classes on CIFAR-10, and 5 and 10 classes from a total of 100 classes on CIFAR-100, respectively \citep{zhang2020personalized}. The number of sampling classes is represented as “c” in Table \ref{ta:all_baselines} and the fewer classes each client owns, the more heterogeneous the setting is.

\begin{figure*}[h]
    \centering
    \begin{subfigure}{1\linewidth}
		\centering		\includegraphics[width=1.0\linewidth]{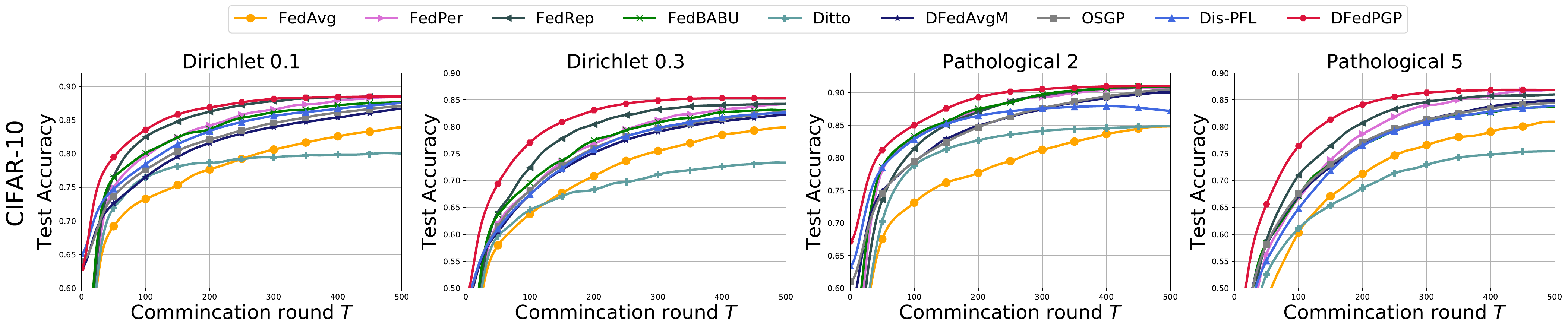}
	    \label{CIFAR-10}
     \vspace{-0.3cm}
	\end{subfigure}
         \begin{subfigure}{1\linewidth}
		\centering		\includegraphics[width=1.0\linewidth]{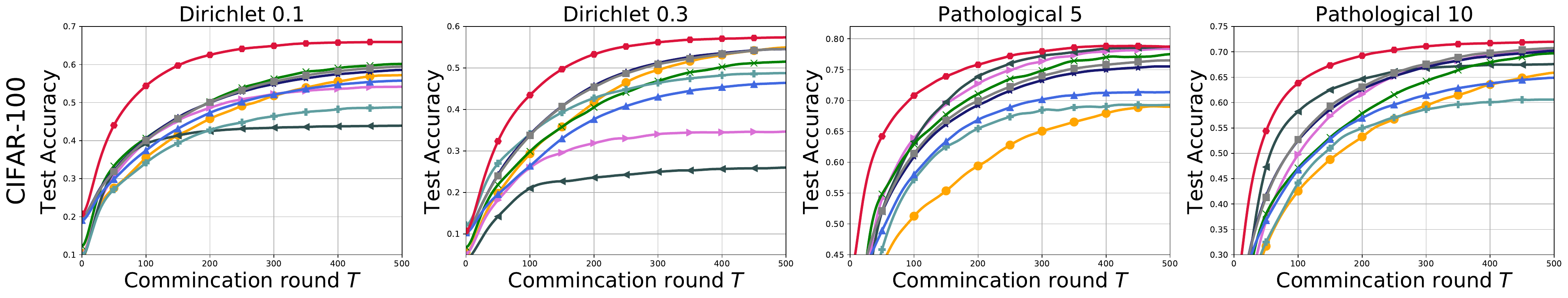}
	    \label{CIFAR-100}
        \vspace{-0.3cm}
	\end{subfigure}
        \caption{\small Test accuracy on CIFAR-10 (first line) and CIFAR-100 (second line) with heterogenous data partitions. With limited pages, we only show the training progress of the typical methods.  }
	\label{fig:baseline}
     \vspace{-0.2cm}
\end{figure*}

\medskip
\noindent
\textbf{Baselines and Backbone.}
We compare the proposed methods with the SOTA baselines PFL. For instance, Local is the simplest method where each client only conducts training on their own data without communicating with other clients. Federated learning methods include FedAvg  \citep{mcmahan2017communication}, FedPer  \citep{arivazhagan2019federated}, FedRep  \citep{collins2021exploiting}, FedBABU \citep{oh2021fedbabu} and Ditto \citep{li2021ditto}. For DFL methods, we take DFedAvgM \citep{sun2022decentralized}, Dis-PFL  \citep{Rong2022DisPFL} and OSGP \citep{assran2019stochastic} as our baselines. All methods are evaluated on ResNet-18  \citep{he2016deep} and replace the batch normalization with the group normalization followed by  \citep{wu2018group,Rong2022DisPFL,shi2023improving} to avoid unstable performance. For the partial PFL methods, we set the lower linear classifier layers as the personal part responsible for complex pattern recognition, and the rest upper representation layer as the shared layers focusing on feature extraction. Note that we compare the personal test accuracy for all methods since our goal is to solve PFL.

\medskip
\noindent
\textbf{Implementation Details.}
We keep the same experiment setting for all baselines and perform 500 communication rounds with 100 clients. The client sampling radio is 0.1 in CFL, while each client communicates with 10 neighbors in PFL accordingly. The batch size is  128. For DFedPGP, we train the shared part for 5 epochs per round as the same as other baselines, and train 1 epoch for the personal part to align the shared part and save the computation consumption. We set SGD \citep{robbins1951a} as the base optimizer for all methods with a learning rate $\eta_u = 0.1$ to update the model parameters and the learning rate decreasing by 0.99× exponentially. All methods are set with a decay rate of 0.005 and a local momentum of $0.9$. We report the mean performance with 3 different random seeds and more details of the baseline methods can be found in \textbf{Appendix} \ref{exp:baselines}.

\subsection{Performance Evaluation}
\textbf{Comparison with the Baselines.}
  As shown in Table 1 and Figure 2, the proposed DFedPGP outperforms other baseline methods with the best stability and better performance in both different datasets and different data heterogeneity scenarios. Specifically, on the CIFAR-10 dataset, DFedPGP achieves 86.50\% on the Directlet-0.3 setups, 1.11\%  ahead of the best-comparing method FedRep. On the CIFAR-100 dataset, DFedPGP achieves at least 2.60\% and 1.11\% improvement from the other baselines on the Directlet-0.3 and Pathological-10 settings. The communication based on the directed graph allows clients to choose their in-neighbors and out-neighbors flexibly, guaranteeing that they can choose useful information for their local training.

\medskip
 \noindent 
\textbf{Comparison on Heterogeneous Setting.} 
We discuss two data heterogeneities, Dirichlet distribution and Pathological distribution in Table \ref{ta:all_baselines}, and prove the effectiveness and robustness of the DFedPGP. In PFL tasks, since the local training can’t cater for all classes inside clients, the accuracy decreases with the heterogeneity decreasing. \footnote{Generally, the higher data heterogeneity means a greater difference between local data distribution.  But in extreme data heterogeneity PFL tasks, the higher heterogeneity means it owns fewer data classes locally, which makes the classification task easier and clients will achieve better performance. For example, in the Pat-2 setting, the local binary classification task is easier than the five classification tasks in the Pat-5 setting, so the average test performance in Pat-2 is better than that in Pat-5. The same phenomenon can be seen in most PFL works \citep{Rong2022DisPFL,oh2021fedbabu,xu2023personalized,zhang2020personalized,huang2021personalized}.} 
On the CIFAR-10 dataset, when the heterogeneity decreases from 0.1 to 0.3 in Directlet distribution, FedRep drops from 88.78\% to 84.50\%, while DFedPGP drops about 3.24\% to 85.61\%, meaning its stronger stability for several heterogeneous settings. On the Pathological distribution, DFedPGP beat the best-compared baselines over 1.11\% on the CIFAR-100 dataset with only 10 categories per client, which confirm that the proposed methods could achieve better performance in the strong heterogeneity.

\medskip
\noindent
\textbf{Comparison on the Convergence Speed with Baselines.} We show the convergence speed via the learning curves of the compared methods in Figure \ref{fig:baseline} and Table \ref{ta:convergency speed}. DFedPGP achieves the fastest convergence speed among the comparison methods, which benefits from the direct partial model transmission and alternate update. For example, DFedPGP is almost twice as fast as the other methods in Dirichlet-0.3 on CIFAR-10 and CIFAR-100 settings. In comparison with the CFL methods, directly learning the neighbors’ feature representation in DFL can speed up the convergence rate for personalized problems. Notably, we target the setting where the busiest node’s communication bandwidth is restricted for fairness when compared with the CFL methods.

\begin{table} [b]
\centering\small
\caption{ \small The required communication rounds when achieving the target accuracy (\%).}
\label{ta:convergency speed}
\begin{tabular}{lcc|cc} 
\toprule
\multirow{3}{*}{Algorithm} & \multicolumn{2}{c|}{CIFAR-10}   & \multicolumn{2}{c}{CIFAR-100}  \\ 
\cmidrule{2-5}
                           & Dir-0.3        & Pat-2          & Dir-0.3      & Pat-10          \\ 
\cmidrule{2-5}
                           & acc@80         & acc@90         & acc@45       & acc@65          \\ 
\midrule
FedAvg                     & -              & -              & 234          & 456             \\
FedPer                     & 262            & 343            & -            & 246             \\
FedRep                     & 189            & 322            & -            & 210             \\
FedBABU                    & 270            & 312            & 261          & 314             \\
Ditto                      & -              & -              & 279          & -               \\ 
\midrule
DFedAvgM                   & 320            & 452            & 187          & 249             \\
OSGP                       & 309            & 439            & 192          & 230             \\
Dis-PFL                    & 307            & -              & 368          & 492             \\ 
\midrule
DFedPGP                    & \textbf{ 131 } & \textbf{ 224 } & \textbf{111} & \textbf{ 113 }  \\
\bottomrule
\end{tabular}
\end{table}

\begin{table*} [th]
\centering \small 
\vspace{-0.4cm}
\caption{ \small Test accuracy (\%) in computation resources heterogeneous setting.}
\label{ta:resource}
\begin{tabular}{l|ccccccccc} 
\toprule
Algorithm & FedAvg & FedPer & FedRep & FedBABU & Ditto & DFedAvgM & Dis-PFL & OSGP  & DFedPGP  \\ 
\midrule
Dir   & 75.76  & 81.06  & 83.08  & 72.66   & 75.63 & 82.70    & 82.41   & 82.81 & \textbf{83.63 }   \\
Pat     & 81.04  & 91.09  & 88.57  & 83.06  & 82.26 & 91.52    & 91.40   & 91.61 & \textbf{91.83}    \\
\bottomrule
\end{tabular}
\end{table*}

\medskip
\noindent
\textbf{Comparison on Computation Resources Heterogeneity.} In reality, the federating process often involves heterogeneous devices, which means the shared parameters are trained at different convergence levels.  We follow \citep{avdiukhin2021federated, Rong2022DisPFL} to divide 100 clients into 5 parts and transmit their parameters after 1, 2, 3, 4 and 5 local epochs to simulate the different computation capabilities of each device \citep{wang2020tackling}.

\begin{figure}[t]
	\centering
	\begin{minipage}{0.495\linewidth}
		\centering
		\includegraphics[width=0.9\linewidth]{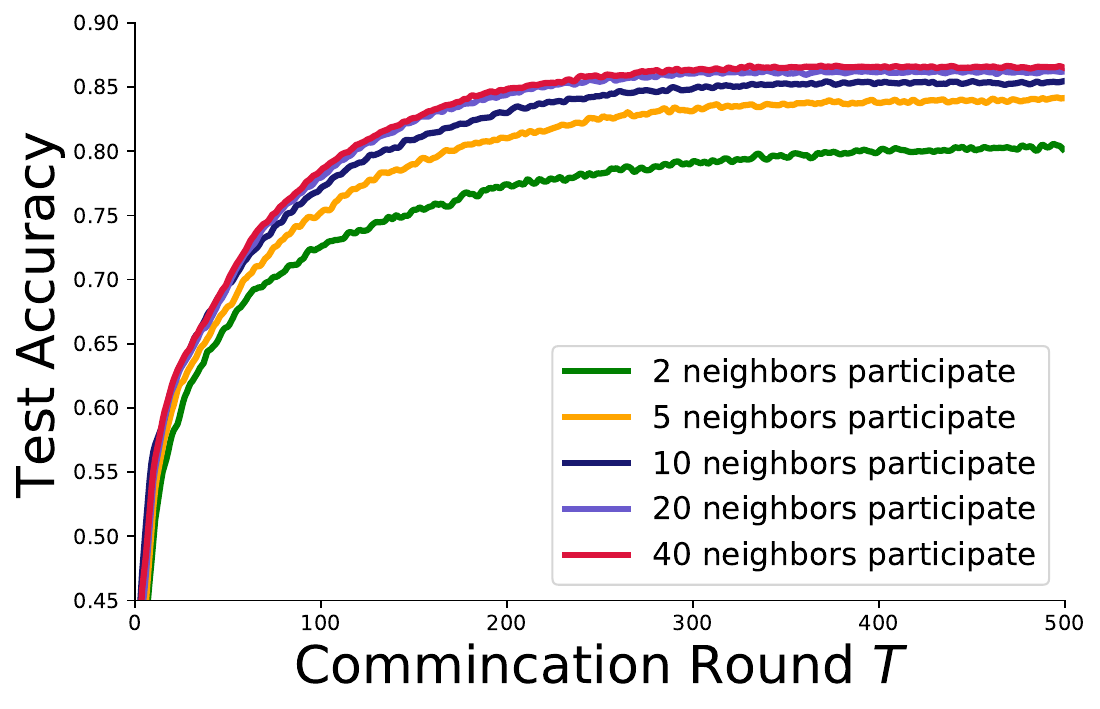}
		\subcaption{}
		\label{fig:abla_neighbor}
	\end{minipage}
	\begin{minipage}{0.495\linewidth}
		\centering
		\includegraphics[width=0.9\linewidth]{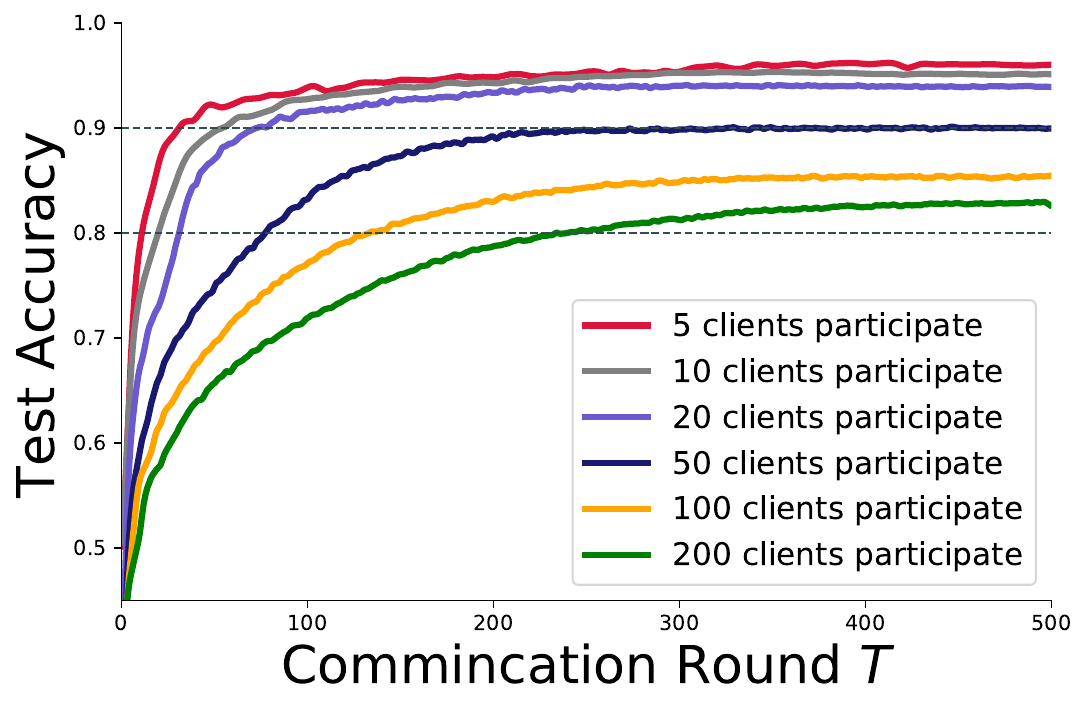}
		\subcaption{ }
		\label{fig:abla_participated}
	\end{minipage}
 \caption{ Ablation study. (a) Effect of the number of neighboring clients. (b) Effect of the number of participated clients.}

\end{figure}

Table \ref{ta:resource} shows the comparison among PFL methods under a computation heterogeneous setting on the CIFAR10 dataset with Dirichlet-0.3 distribution. DFedPGP achieves the best compared with the other baselines, indicating that the partial gradient push can alleviate the effect of the different convergence level aggregation. Another interesting finding is that FedRep is the best PFL method encountering the computation heterogeneous challenge in CFL, indicating that keeping the classifiers locally and updating the private and shared parts alternately is an effective way to solve the computation heterogeneity problem.

\subsection{Ablation Study}

\textbf{Number of Neighboring Clients.}
We conduct experiments on the convergence performance under different neighbor participation numbers of \{2, 5, 10, 20, 40\} on CIFAR-10 with Dir-0.3 distribution. As shown in Figure \ref{fig:abla_neighbor}, the highest personalized performance is achieved when the participation number is set to 40, which indicates that with more clients exchanging their information, a quicker convergence speed will be achieved, aligning with our insight. Notably, the proposed DFedPGP can realize a stable convergence even when transmitting information to only 2 neighbors.

\medskip
\noindent
\textbf{Number of Participated Clients.} As depicted in Figure \ref{fig:abla_participated}, we compare the personalized performance between different numbers of client participation of \{5, 10, 20, 50, 100, 200\} on the CIFAR-10 dataset with Dirichlet-0.3 distribution under the same hyper-parameters. Compared with larger participated clients \{50, 100, 200\}, the smaller participated clients \{5, 10\} can achieve better test accuracy and convergence as the number of local data increases.


\medskip
\noindent
\textbf{Module Augmentation Ablation.}  We investigate the effect of partial personalization and directed communication with different data heterogeneity on the CIFAR-10 dataset. From Table \ref{ta:module_abla}, DFedPGP achieves the best in both Dirichlet-0.3 and Pathological-2. In the comparison of partial personalization, DFedAvgM-P and DfedPGP outperform their full personalization versions DFedAvgM and OSGP by a great margin separately. In directed communication comparison, DfedPGP and OSGP outperform their undirected versions DFedAvgM-P and DFedAvgM, respectively. From the ablation study, both partial personalization and directed communication have a great influence on decentralized and personalized performance. 
Randomly choosing clients' in-neighbors and out-neighbors in directed graphs means that the shared part model has a larger feature search space among clients, compared with the undirected graphs. Intuitively, this increases the involved clients in one communication round and enhances communication efficiency.

\begin{table} [t]
\centering \scriptsize
\caption{ \small Test accuracy (\%) of different module augmentation.}
\label{ta:module_abla}
\begin{tabular}{l|cc|c|l} 
\toprule
Algorithm  & \begin{tabular}[c]{@{}c@{}}Partial\\~Personalization\end{tabular} & \begin{tabular}[c]{@{}c@{}}Directed \\Communication\end{tabular} & Dir   & Pat    \\ 
\midrule
DFedAvgM   &                                                                   &                                                                  & 82.49 & 90.23  \\
DFedAvgM-P &     \checkmark                                                               &                                                                  & 84.69 & 90.90  \\
OSGP       &                                                                   &  \checkmark                                                                 & 83.14 & 90.72  \\
DFedPGP    &   \checkmark                                                                 &    \checkmark                                                   & \textbf{85.61} & \textbf{91.26}  \\
\bottomrule
\end{tabular}
\end{table}
\section{Conclusion}

In this paper, we propose a novel method DFedPGP for PFL, which simultaneously guarantees robust communication and better personalized performance with convergence guarantee via partial gradient push in a directed communication graph. The directed collaboration allows clients to choose their corporate neighbors flexibly, which guarantees effective aggregation and learning under data and device heterogeneity scenarios.
For theoretical findings, we present the personalized convergence rate of $\mathcal{O}(1/\sqrt{T})$ in the non-convex setting for DFedPGP. Empirical results also verify the superiority of the proposed approach.

\medskip
\noindent
\textbf{Future Works.} In the current work, we mainly focus on the theoretical analysis and the experiment verification of the partial Push-sum based optimization framework with the directed graph for DPFL. It can be extended with effective client selection methods to speed up the convergence and improve personalized performance in the future. 

\medskip
\noindent
\textbf{Acknowledgment.} 
This work is supported by STI 2030—Major Projects (No. 2021ZD0201405).

\clearpage
{
 \small
\bibliographystyle{ieeenat_fullname} 
\bibliography{ref}
}

\clearpage
\newpage
\onecolumn 

\vspace{0.5in}
\begin{center}
 \rule{6.875in}{0.7pt}\\ 
 {\Large\bf Supplementary Material for\\ `` Decentralized Directed Collaboration for Personalized Federated Learning ''}
 \rule{6.875in}{0.7pt}
\end{center}
\appendix

\noindent
In this part, we provide supplementary materials including more introduction to the related works, experimental details and results, and the proof of the main theorem.

\begin{itemize}
    \item \textbf{Appendix} \ref{ap:related_works}: More details in the related works.
    \item \textbf{Appendix} \ref{ap:selection}: More details in the client selection.
    \item \textbf{Appendix} \ref{ap:baseline}: More details in the experiments.
    \item \textbf{Appendix} \ref{ap:proof}: Proof of the theoretical analysis.
\end{itemize}

\section{More Details in the Related Works}\label{ap:related_works}

\textbf{Decentralized/Distributed Training.}
Decentralized/Distributed Training, which allows edge clients to communicate with each other in a peer-to-peer manner, is an encouraging field that shares several benefits: (1) guarantees collaborative learning through local computation and the exchange of model parameters; (2) is low for feeding the models of adjacent clients, generating a more intelligent private model; (3) avoids central failure in the collaborative system. Thus, Decentralized/Distributed Training has been applied in many fields\citep{beltran2022decentralized}: (1) Healthcare \citep{nguyen2022novel}, favoring the decentralization of clinical records and collaborative diagnosis; (2) Mobile Services \citep{wang2022accelerating}, decreasing response times and increasing the bandwidth of constraints devices; (3) Vehicles \citep{yu2020proactive}, ensuring high mobility and local storage management. 

Since the prototype of DFL (fully decentralized federated learning \citep{lalitha2018fully}) was proposed, it has been a promising approach to save communication costs as the compromise of CFL. By combining SGD and gossip, early work achieved decentralized training and convergence in \cite{blot2016gossip}. D-PSGD \cite{lian2017can} is the classic decentralized parallel SGD method. FastMix \cite{ye2020decentralized} investigates the advantage of increasing the frequency of local communications within a network topology, which establishes the optimal computational complexity and near-optimal communication complexity. DeEPCA \cite{ye2021deepca} integrates FastMix into a decentralized PCA algorithm to accelerate the training process. DeLi-CoCo \cite{Hashemi2022On} performs multiple compression gossip steps in each iteration for fast convergence with arbitrary communication compression. Network-DANE \cite{li2020communication} uses multiple gossip steps and generalizes DANE to decentralized scenarios.
QG-DSGDm \cite{lin2021quasi} modifies the momentum term of decentralized SGD (DSGD) to be adaptive to heterogeneous data, while SkewScout \cite{hsieh2020non} replaces batch norm with layer norm. Meta-L2C \cite{li2022learning} dynamically updates the mixing weights based on meta-learning and learns a sparse topology to reduce communication costs. The work in \cite{zhu2022topology} provides the topology-aware generalization analysis for DSGD, they explore the impact of various communication topologies on the generalizability.

\section{More details in the client selection}\label{ap:selection}

\noindent
\textbf{Push sum based directed distributed averaging.}  The initial Push sum algorithm \citep{kempe2003gossip} considers the averaged consensus $1/n \sum^n_{i=1}  y_i^{0}$ of all clients. Let $y_i^{0} \in \mathbb{R}^d$ be a vector at client $i$ and typical gossip iterations forms $y_i^{t+1}=\sum_{j=1}^n p_{i, j}^{t} y_j^{t}$,  where $P^{t} \in \mathbb{R}^{n\times n}$ is the mixing matrix. Inspired by the Markov chains \citep{seneta2006non}, the mixing matrices $P^t$ are designed to be column stochastic (each column must sum to 1). So the gossip iterations converge to a limit $y_i^{\infty} =  \pi_i \sum_{j=1}^n y_j^0$, where \textbf{$\pi$} is the ergodic limit of the chain. When the matrices $P^{t}$ are symmetric, it is straightforward to satisfy $\pi_i = 1/n$ by defining $P^{t}$ doubly-stochastic (each row and each column must sum to 1). However, symmetric $P^{t}$ are hard to meet due to the unstable communication in reality. The Push sum algorithm adds one additional scalar parameter $w^t_i$ to achieve $\pi_i = 1/n$ under the column-stochastic and asymmetric mixing matrices $P^{t}$. The parameter is initialized to $w^0_i = 1$ for all $i$ and updated using the same linear iteration, $w_i^{t+1} = \sum^n_{j=1} p_{i,j}^tw_j^t$. It recovers the average of the initial vectors by computing the de-biased ratio $y^\infty_i / w^\infty_i$, and the scalar parameters converge to $w_i^\infty = \pi_i\sum_{j=1}^n w^0_j$. 

\noindent
\textbf{Directed random graph.} We transfer the mixing matrices from column stochastic (all columns sum to 1) to row stochastic (all rows sum to 1), meaning that the clients can actively select the information they need rather than passively accept, which is more beneficial for directed collaboration in the DPFL problem. 
In the experiments, each client pulls the shared parameters from its in-neighbors $j \in \mathcal{N}^{in}_{i,t}$, and “pulls a message” from itself as well. Recall that each client $i$ can choose its mixing weights ($i$th row of $P^t$) independently of the other clients. So in order to provide more flexible collaboration and closer ties for clients, we randomly choose the in-neighbors under the communication bandwidth limitation. We use uniform mixing weights for the pulled models here, meaning that clients assign uniform model weights to all neighbors. So assuming that each client can pull models with up to $n$ neighbors, each row $P^t_i$ of $P^t$ has exactly $n+1$ non-zero entries, both of which are equal to $1/(n+1)$. 
Thus, we get that
\begin{equation} \label{ap:mixing matrices}
    p_{i,j}^t= \begin{cases}1/(n+1), & j \in \mathcal{N}_{i,t}^{in}; \\ 
    0, & \text { otherwise. }\end{cases}
\end{equation}

\noindent
\textbf{Undirected random graph.} For the undirected DPFL methods (i.e. DFedAvgM and Dis-PFL), we use a time-varying and undirected random graph to represent the inter-client connectivity. Clients randomly choose their in-neighbors to pull the shared models and push a message in return. We adopt these graphs to be consistent with the experimental setup used in \citep{Rong2022DisPFL,shi2023towards,sun2022decentralized}. So the mixing matrics in the undirected graph is a symmetric doubly-stochastic (each row and each column must sum to 1), which satisfies $p_{i,j}^t = p_{j,i}^t$ in Formula (\ref{ap:mixing matrices}). Notably, the model communication bandwidth of in-neighbors in DPFL is strictly limited as the same as the busiest server in CPFL. 

\section{More details in the experiments}\label{ap:baseline}

In this section, we provide more details of our experiments including datasets, baselines, and more extensive experimental results to compare the performance of the proposed DFedPGP against other baselines on the Tiny-ImageNet dataset. All our experiments are trained and tested on a single Nvidia RTX3090 GPU under the environment of Python 3.8.5, PyTorch 1.11.1, CUDA 11.6, and CUDNN 8.0.

\subsection{More Details about Baselines} \label{exp:baselines}
\textbf{Local} is the simplest method for personalized learning. It only trains the personalized model on the local data and does not communicate with other clients. For the fair competition, we train 5 epochs locally in each round.

\noindent
\textbf{FedAvg} \cite{mcmahan2017communication} is the most commonly discussed method in FL. It selects partial clients to perform local training on each dataset and then aggregates the trained models to update the global model. Actually, the local model in FedAvg is also the comparable personalized model for each client. 

\noindent
\textbf{FedPer} \cite{arivazhagan2019federated} proposes a model decoupling approach for PFL, with a consensus representation and many local classifiers, to combat the ill effects of statistical heterogeneity. We set the linear layer as the personalized layer and the rest model as the base layer.  It follows FedAvg’s training paradigm but only passes the base layer to the server and keeps the personalized layer locally.

\noindent
\textbf{FedRep} \cite{collins2021exploiting} also proposes a personalized model decoupling framework like FedPer, but it fixes one part when updating the other. We follow the official implementation\footnote{\url{https://github.com/lgcollins/FedRep}} to train the head for 10 epochs with the body fixed, and then train the body for 5 epochs with the head fixed. 

\noindent
\textbf{FedBABU} \cite{oh2021fedbabu} is also a model decoupling method that achieves good personalization via fine-tuning from a good shared representation base layer. Different from FedPer and FedRep, FedBABU only updates the base layer with the personalized layer fixed and finally fine-tunes the whole model. Following the official implementation\footnote{\url{https://github.com/jhoon-oh/FedBABU}}, it fine-tunes 5 times in our experiments. 

\noindent
\textbf{Ditto} \cite{li2021ditto} achieves personalization via a trade-off between the global model and local objectives. It totally trains two models on the local datasets, one for the global model (similarly aggregated as in FedAvg) with its local empirical risk, and one for the personal model (kept locally) with both empirical risk and the proximal term towards the global model. We set the regularization parameters $\lambda$ as 0.75. 

\noindent
\textbf{DFedAvgM} \cite{sun2022decentralized} is the decentralized FedAvg with momentum, in which clients only connect with their neighbors by an undirected graph. For each client, it first initials the local model with the received models then updates it on the local datasets with a local stochastic gradient. 

\noindent
\textbf{OSGP} \cite{assran2019stochastic} is the directed version of DFedAvg, which allows clients to send the local models to their out-neighbors by a directed graph. It is regarded as a representative of a personalized baseline over directed communication.

\noindent
\textbf{Dis-PFL} \cite{Rong2022DisPFL} employs personalized sparse masks to customize sparse local models in the PFL setting. Each client first initials the local model with the personalized sparse masks and updates it with empirical risk. Then filter out the parameter weights that have little influence on the gradient through cosine annealing pruning to obtain a new mask. Following the official implementation\footnote{\url{https://github.com/rong-dai/DisPFL}}, the sparsity of the local model is set to 0.5 for all clients.

\subsection{Datasets and Data Partition}

CIFAR-10/100 and Tiny-ImageNet are three basic datasets in the computer version study. As shown in Table \ref{ta:all_data}, they are all colorful images with different classes and different resolutions. We use two non-IID partition methods to split the training data in our implementation. One is based on Dirichlet distribution on the label ratios to ensure data heterogeneity among clients. The Dirichlet distribution defines the local dataset to obey a Dirichlet distribution (see in Figure \ref{fig:dir}), where a smaller $\alpha$ means higher heterogeneity. Another assigns each client a limited number of categories, called Pathological distribution. Pathological distribution defines the local dataset to obey a uniform distribution of active categories $c$ (see in Figure \ref{fig:pat}), where fewer categories mean higher heterogeneity. The distribution of the test datasets is the same as in training datasets. We run 500 communication rounds for CIFAR-10, CIFAR-100, and 300 rounds for Tiny-ImageNet.

\begin{table}[ht]
\centering
\small
\caption{ \small  The details on the CIFAR-10 and CIFAR-100 datasets.}
\label{ta:all_data}
\begin{tabular}{ccccc} 
\toprule
Dataset       & Training Data & Test Data & Class & Size     \\ 
\midrule
CIFAR-10      & 50,000        & 10,000    & 10    & 3×32×32  \\
CIFAR-100     & 50,000        & 10,000    & 100   & 3×32×32  \\
Tiny-ImageNet & 100,000       & 10,000    & 200   & 3×64×64        \\
\bottomrule
\end{tabular}
\end{table}

\begin{figure}[h]
	\centering
	\begin{minipage}{0.45\linewidth}
		\centering
		\includegraphics[width=1\linewidth]{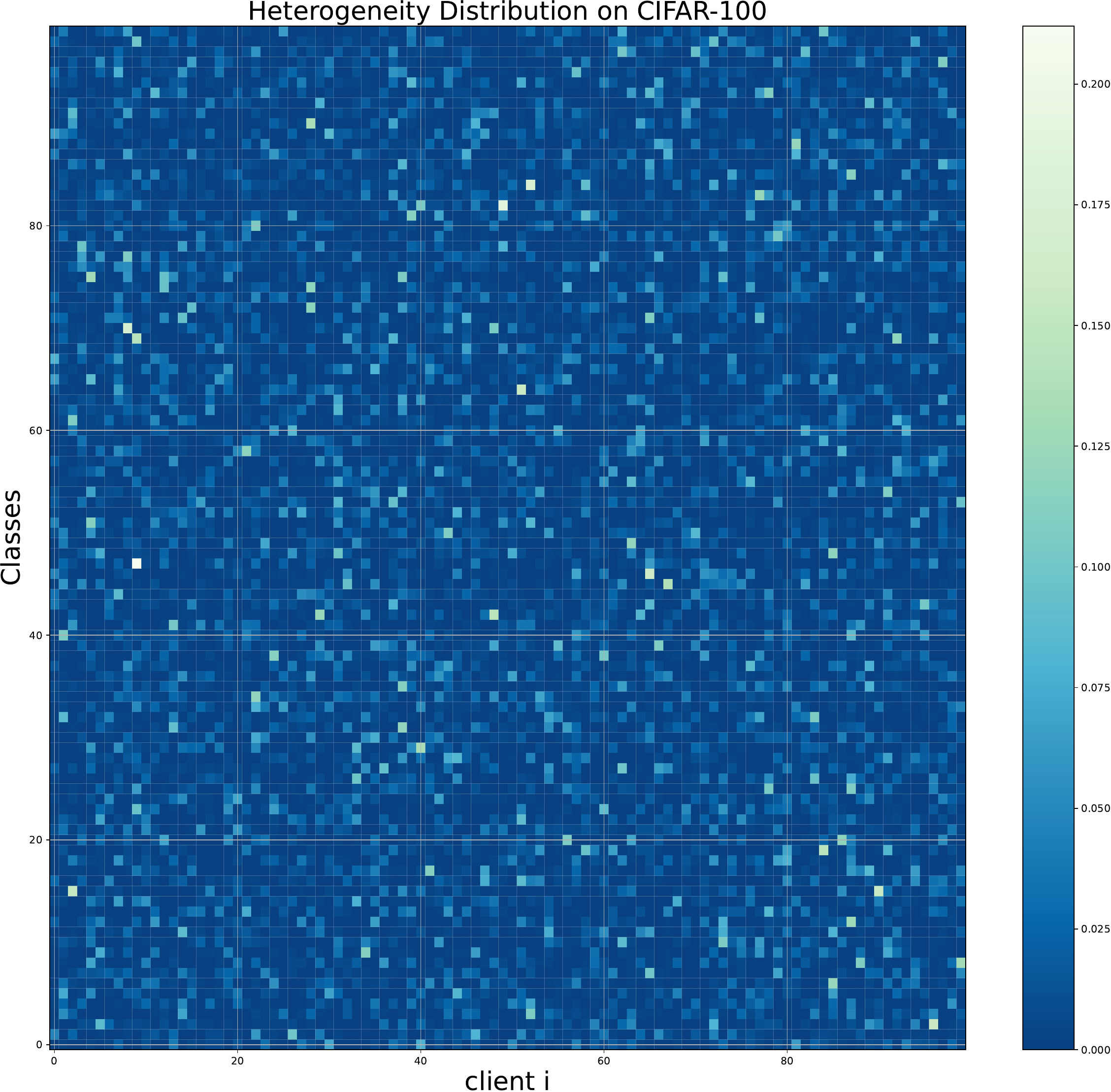}
		\subcaption{Dirichlet $\alpha=0.3$ on CIFAR-100.}
		\label{fig:dir}
	\end{minipage}
	\begin{minipage}{0.45\linewidth}
		\centering
		\includegraphics[width=1\linewidth]{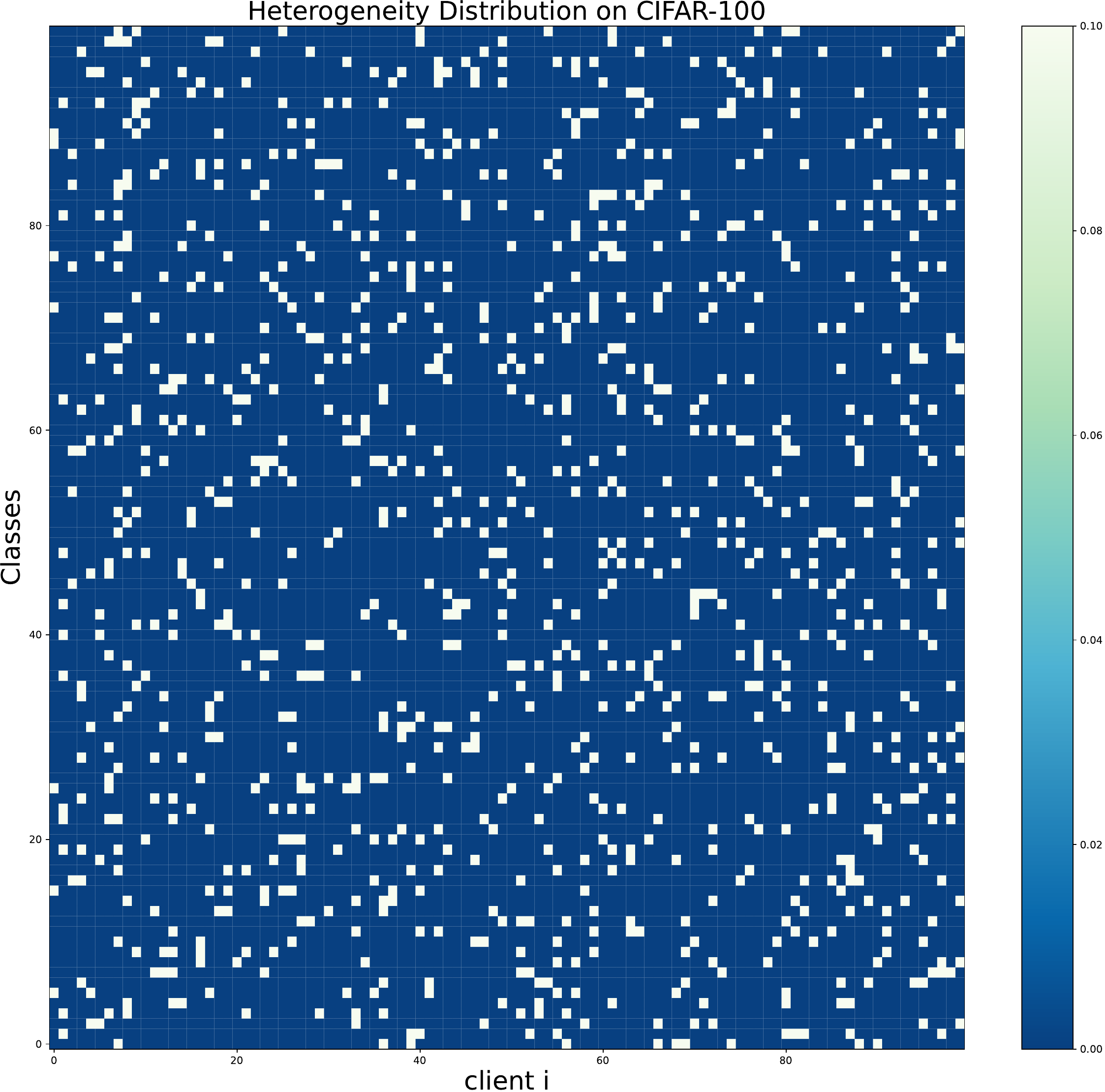}
		\subcaption{Pathological $c = 10$ on CIFAR-100.}
		\label{fig:pat}
	\end{minipage}

 \caption{ Heat-map of the Dirichlet split and Pathological split.}
\end{figure}

\subsection{More Experiments Results on Tiny ImageNet}\label{exper:tiny}
\textbf{Comparison with the baselines.} In Table \ref{ap:ta_baselines} and Figure \ref{fig:img_tinybaseline}, we compare DFedPGP with other baselines on the Tiny-ImageNet with different data distributions. The comparison shows that the proposed method has a competitive performance, especially under higher heterogeneity, e.g. Pathological-10. Specifically in the Pathological-10 setting, DFedPGP achieves 49.16\%, at least 1.81\% and 7.08\%  improvement from the CFL methods and DFL methods. However, in the Dirichlet-0.3 setting, almost all the partial model personalized methods (i.e. FedPer, FedRep, DFedPGP except FedBABU) face a severe performance degradation compared with the full model personalized methods (i.e. FedAvg, DFedAvgM, OSGP). This may account for the low classification ability in partial model personalized methods without aggregation with neighbors in the multiple-image classification tasks, especially in the long-tail data distribution scenario (i.e. Dirichelet-0.3). The original intention of our design is to build a great personalized model through partial model personalization training and directed collaboration with neighbors. So when the heterogeneity increases, our algorithms have a significant improvement.

\begin{figure} [h]
\centering
\includegraphics[width=1\textwidth]{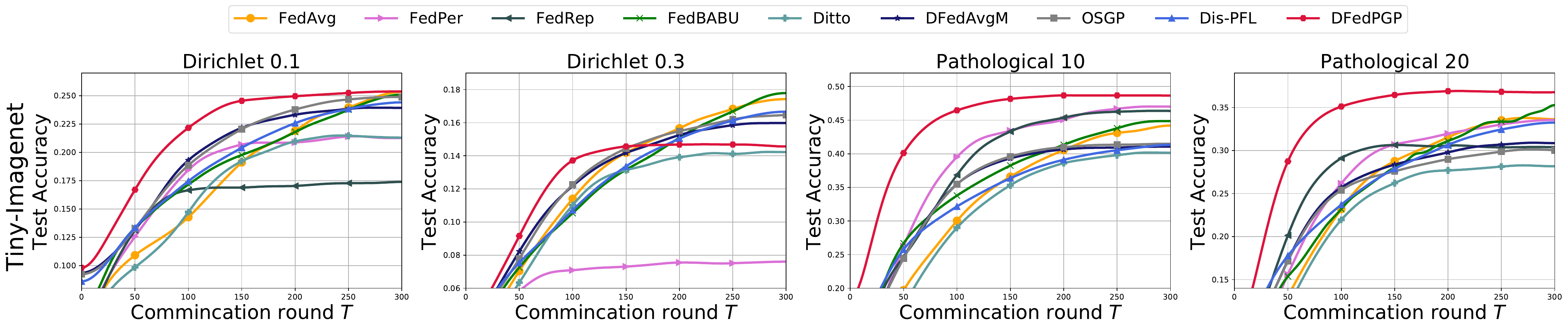}
\centering
\caption{\small Test accuracy on Tiny-ImageNet with heterogenous data partitions. }
\label{fig:img_tinybaseline}
\end{figure}

\begin{table}[h]
\centering \small
\caption{ \small  Test accuracy (\%) on Tiny-ImageNet in both Dirichlet and Pathological distribution settings on Tiny-ImageNet.}
\label{ap:ta_baselines}
\begin{tabular}{lcccc} 
\toprule
\multirow{3}{*}{Algorithm} & \multicolumn{4}{c}{Tiny-ImageNet}                                                                            \\ 
\cmidrule{2-5}
                           & \multicolumn{2}{c}{Dirichlet}                       & \multicolumn{2}{c}{Pathological}                      \\ 
\cmidrule{2-5}
                           & $\alpha$ = 0.1          & $\alpha$ = 0.3            & c = 10                    & c = 20                     \\ 
\midrule
Local                      & $12.13_{\pm.13}$          & $5.42_{\pm.21} $        & $28.49_{\pm.16} $           & $16.72_{\pm.34} $            \\
FedAvg                     & $25.55_{\pm.02}$          & $17.58_{\pm.25}$            & $44.56_{\pm.39}$            & $ 34.10_{\pm.59}$             \\
FedPer                     & $21.64_{\pm.72}$          & $7.71_{\pm.08} $            & $47.35_{\pm.03}$            & $ 33.68_{\pm.33}$             \\
FedRep                     & $17.54_{\pm.79}$          & $5.78_{\pm.05}$            & $46.76_{\pm.73} $           & $ 31.15_{\pm.54}$             \\
FedBABU                    & $25.59_{\pm.08}$          & $\textbf{18.18}_{\pm.06}$            & $46.53_{\pm.20} $           & $ 37.01_{\pm.31} $           \\
Ditto                      & $21.71_{\pm.66}$          & $14.47_{\pm.14}$            & $40.65_{\pm.15} $           & $ 28.74_{\pm.38} $           \\ 
\midrule
DFedAvgM                   & $24.42_{\pm.74}$          & $16.51_{\pm.68} $           & $41.94_{\pm.37} $           & $31.50_{\pm.46}$         \\
OSGP                   & $25.29_{\pm.26}$          & $17.07_{\pm.17} $           & $42.08_{\pm.43} $           & $30.58_{\pm.51}$         \\
Dis-PFL                     & $24.71_{\pm.18}$          & $16.94_{\pm.36}$            & $41.93_{\pm.12}$            & $33.57_{\pm.62}$        \\
\midrule
DFedPGP                    & $\textbf{25.71}_{\pm.20}$          & $14.94_{\pm.44}$          & $\textbf{49.16}_{\pm.19}$         & $\textbf{37.25}_{\pm.27}  $            \\
\bottomrule
\end{tabular}
\end{table}

\noindent

\begin{table}[h]
\centering \small
\caption{ \small The required communication rounds when achieving the target accuracy (\%) on Tiny-ImageNet.}
\label{ap:ta_convergency speed}
\begin{tabular}{lcc|cc|cc|cc} 
\toprule
\multirow{3}{*}{Algorithm} & \multicolumn{8}{c}{Tiny-ImageNet}                                                                                     \\ 
\cmidrule{2-9}
                           & \multicolumn{2}{c|}{Dirichlet-0.1} & \multicolumn{2}{c|}{Dirichlet-0.3} & \multicolumn{2}{c|}{Pathological-10} & \multicolumn{2}{c}{Pathological-20}  \\ 
\cmidrule{2-9}
                           & acc@20      & speedup       & acc@14 & speedup             & acc@40      & speedup      & acc@30      & speedup       \\ 
\midrule
FedAvg                     & 160         & 1.11 ×         & 144    & 1.47 ×              & 192         & 1.36 ×        & 172         & 1.50 ×         \\
FedPer                     & 123         & 1.45 ×         & -      & -                   & 103         & 2.53 ×        & 134         & 1.93 ×         \\
FedRep                     & -           & -             & -      & -                   & 116         & 2.25 ×        & 117         & 2.21 ×         \\
FedBABU                    & 156         & 1.14 ×         & 174    & 1.22 ×               & 178         & 1.47 ×        & 181         & 1.43 ×         \\
Ditto                      & 178         & 1.00 ×            & 212    & 1.00 ×                  & 261         & 1.00 ×           & -           & -             \\ 
\midrule
DFedAvgM                   & 110        & 1.62 ×         & 141    & 1.50 ×            & 173         & 1.51 ×        & 210       & 1.23 ×            \\
OSGP                   & 115         & 1.55 ×         & 136    & 1.56 ×            & 160         & 1.63 ×        & 258       & 1.00 ×            \\
Dis-PFL                    & 143         & 1.24 ×         & 166    & 1.28 ×            & 227         & 1.15 ×        & 188       & 1.37 ×               \\

\midrule
DFedPGP                    &\textbf{ 74  }        & \textbf{2.41 × }        & \textbf{108 }   & \textbf{1.96 × }              & \textbf{54} & \textbf{4.83 ×}      & \textbf{53} & \textbf{4.87 × }        \\
\bottomrule
\end{tabular}
\end{table}

\noindent
\textbf{Convergence speed.} We show the convergence speed of DFedPGP in Table \ref{ap:ta_convergency speed} and Figure \ref{fig:img_tinybaseline} by reporting the number of rounds required to achieve the target personalized accuracy (acc@) on Tiny-ImageNet. We set the algorithm that takes the most rounds to reach the target accuracy as “1.00×”, and find that the proposed DFedPGP achieves the fastest convergence speed on average (3.51× on average) among the SOTA PFL algorithms. Direct communication guarantees flexible choice of neighbors and closer ties between clients, which speeds up personalized convergence and achieves higher personalized performance for each client. Also, the partial model personalization and alternate updating mode will both bring a comparable gain to the convergence speed from the difference between DFedPGP and OSGP. Thus, our methods can efficiently train the personalized model under different data heterogeneity.

\subsection{More Details about hyperparameters selection}\label{exper:hyperparameters}
Here we detail the hyperparameter selection in our experiments. We fix the total communication rounds T, mini-batch size and weight decay for all the benchmarks and our proposed DFedPGP. The other selections are stated as follows.

\begin{table}[h]
\caption{ \small General hyperparameters introductions.}
\centering
\small
\begin{tabular}{c|cc} 
\toprule
Hyperparameter               & CIFAR-10/100, Tiny-ImageNet & Best Selection  \\ 
\midrule
Communication Round & 500                         & -               \\
Batch Size           & 128                         & -               \\
Weight Decay         & 5e-4                        & -               \\
Momentum             & 0.9                         & -               \\
Learning Rate Decay  & 0.9                         & -               \\ 
\midrule
Local Interval      & {[}1, 3, 5, 8]              & 5               \\
Local Learning Rate  & {[}0.01, 0.1, 0.5, 1]       & 0.1             \\
\bottomrule
\end{tabular}
\end{table}

\section{Proof of Theoretical Analysis}\label{ap:proof}

\subsection{Preliminary Lemmas}

\begin{lemma}[Local update for personalized model $v_i$ in DFedPGP, Lemma 23 \citep{pillutla2022federated}]
\label{le:v} 
    Consider $F$ which is $L$-smoothness and fix $v^0 \in \mathbb{R}^d$. 
	Define the sequence $(v^{k})$ of iterates produced by stochastic gradient descent with a fixed learning rate $\eta_v \leq 1/(2K_vL_v)$
	starting from $v^{0}$, we have the bound   
 
	\[
		\E \| v^{K_v-1} - v^{0} \|^2 \le 16 \eta_v^2 K_v^2 \E \|\nabla  F(v^{0})\|^2 + 8 \eta_v^2 K_v^2 \sigma_v^2 \,.
        \]
\end{lemma}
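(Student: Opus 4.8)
\textbf{Proof plan for Lemma~\ref{le:v}.}
The plan is to control the drift $\|v^{K_v-1}-v^0\|^2$ of the local iterates by unrolling the $K_v$ stochastic gradient steps and bounding how far each step can move away from the starting point $v^0$. First I would write the telescoping identity
\begin{equation}\label{eq:vplan-telescope}
v^{K_v-1}-v^0 = -\eta_v \sum_{k=0}^{K_v-2}\nabla_v F(v^k;\xi^k),
\end{equation}
so that the displacement is a sum of $K_v-1$ stochastic gradients with step size $\eta_v$. Taking norms, applying the triangle/Cauchy--Schwarz inequality in the form $\|\sum_{k} a_k\|^2 \le (K_v-1)\sum_k \|a_k\|^2$, and then taking expectations reduces the problem to bounding $\E\|\nabla_v F(v^k;\xi^k)\|^2$ for each intermediate iterate $v^k$.

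Second, I would decompose each stochastic gradient into its mean plus noise and invoke the bounded-variance Assumption~\ref{assmp:stoc-grad-var} to separate out the $\sigma_v^2$ term, giving $\E\|\nabla_v F(v^k;\xi^k)\|^2 \le 2\E\|\nabla_v F(v^k)\|^2 + 2\sigma_v^2$ (or the one-sided variance bound directly). The remaining deterministic term $\E\|\nabla_v F(v^k)\|^2$ must then be related back to $\E\|\nabla_v F(v^0)\|^2$ at the anchor point. Here I would use $L_v$-smoothness (Assumption~\ref{assmp:smoothness}) to write $\|\nabla_v F(v^k)\|^2 \le 2\|\nabla_v F(v^0)\|^2 + 2L_v^2\|v^k-v^0\|^2$, which reintroduces the drift term $\|v^k-v^0\|^2$ we are trying to bound.

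Third, this self-referential structure is the heart of the argument: the bound on $\|v^{K_v-1}-v^0\|^2$ depends on the drifts $\|v^k-v^0\|^2$ at earlier steps, which themselves obey the same inequality. I would therefore set up a recursion (or bound the maximal drift $\max_{0\le k\le K_v-1}\E\|v^k-v^0\|^2$ uniformly) and close it using the step-size restriction $\eta_v \le 1/(2K_vL_v)$. This hypothesis is precisely what makes the coefficient $\eta_v^2 L_v^2 K_v^2$ small enough (at most a constant like $1/4$) to absorb the recursive drift term into the left-hand side, leaving only the $\E\|\nabla_v F(v^0)\|^2$ and $\sigma_v^2$ contributions with explicit constants $16\eta_v^2 K_v^2$ and $8\eta_v^2 K_v^2$.

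\textbf{The main obstacle.} The hard part will be carrying the recursive drift bound cleanly so that the constants come out as stated rather than blowing up. The naive triangle-inequality step loses a factor of $K_v$, and the smoothness step reintroduces $\|v^k-v^0\|^2$; one must be careful that summing these over $k=0,\dots,K_v-2$ and applying the step-size bound $\eta_v\le 1/(2K_vL_v)$ yields a geometric/absorbable series rather than an accumulating one. Since this is exactly the local-SGD drift lemma of \citep[Lemma~23]{pillutla2022federated}, I expect the proof to follow that reference, with the only care needed being the bookkeeping of the numerical constants $16$ and $8$ and the verification that the variance here is the per-step $\sigma_v^2$ rather than a summed quantity.
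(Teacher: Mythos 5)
Your proposal is correct and would prove the lemma, but it is mechanically different from the paper's argument. The paper never telescopes the full displacement: it sets up a \emph{per-step} recursion, writing $\E\|v^{k+1}-v^0\|^2 \le \bigl(1+\tfrac{1}{K_v-1}\bigr)\E\|v^k-v^0\|^2 + K_v\eta_v^2\,\E\|\nabla_v F(v^k;\xi)\|^2$ via Young's inequality, then applies the same variance-plus-smoothness decomposition you describe, uses the step-size condition to merge the reintroduced drift coefficient $2K_v\eta_v^2L_v^2$ into the multiplicative factor (yielding $1+\tfrac{2}{K_v-1}$), and finally unrolls the recursion as a geometric series bounded by $(1+\tfrac{2}{K_v-1})^{K_v-1}\le e^2<8$, which is where the constants $16$ and $8$ come from. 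Your route instead telescopes $v^{K_v-1}-v^0=-\eta_v\sum_k \nabla_v F(v^k;\xi^k)$, applies Cauchy--Schwarz to pay the factor $K_v-1$ up front, and closes a self-bounding inequality on the maximal drift $M=\max_k \E\|v^k-v^0\|^2$: under $\eta_v\le 1/(2K_vL_v)$ the drift coefficient is $2\eta_v^2K_v^2L_v^2\le 1/2$, so $M \le 2\eta_v^2K_v^2\sigma_v^2 + 4\eta_v^2K_v^2\E\|\nabla F(v^0)\|^2$, which is in fact \emph{sharper} than the stated constants (your absorption loses only a factor $2$, whereas the paper's exponential factor loses $e^2$). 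The three ingredients — variance decomposition, smoothness re-anchoring to $v^0$, and step-size absorption — are identical in both proofs; what differs is the closure mechanism (uniform maximal-drift fixed point versus unrolled multiplicative recursion). Your version is cleaner and tighter here; the paper's per-step recursion, inherited from Lemma~23 of Pillutla et al., adapts more readily to settings with step-dependent quantities (e.g., varying step sizes or per-step bounds), which is presumably why the authors kept it. The only point to make rigorous in your write-up is the absorption step itself: note explicitly that $M$ is a maximum over finitely many iterates (hence finite) and that the coefficient multiplying $M$ on the right-hand side is at most $1/2$, so it can be moved to the left.
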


\begin{proof}

    \begin{equation}
        \begin{split}
         \E \| v_i^{t,k+1} - v_i^{t,0} \|^2 
        & =  \E \Big\| v_i^{t,k} - \eta_v \nabla_v F_i(z_i^{t}, v_i^{t,k};\xi_i) - v_i^{t,0} \Big\|^2 \\
        &   \overset{a)}\leq \Big(1+\frac{1}{K_v -1}\Big) \E \| v_i^{t,k} - v_i^{t,0}\|^2 + K_v \eta_v^2 \E \Big\| \nabla_v F_i(z_i^{t}, v_i^{t,k};\xi_i) 
        - \nabla_v F_i(z_i^{t}, v_i^{t})+\nabla_v F_i(z_i^{t}, v_i^{t}) \Big\|  \\
        &  \leq \Big(1+\frac{1}{K_v -1}\Big) \E \Big\| v_i^{t,k} - v_i^{t,0} \Big\|^2  +  K_v \eta_v^2 \Big( \sigma_v^2 
        + \E \Big\|\nabla_v F_i(z_i^{t}, v_i^{t}) - \nabla_v F_i(z_i^{t}, v_i^{t,0}) + \nabla_v F_i(z_i^{t}, v_i^{t,0}) \Big\|^2 \Big)\\
        &  \overset{b)}\leq \Big(1+\frac{1}{K_v -1}\Big) \E \| v_i^{t,k} - v_i^{t,0} \|^2 
        +  K_v \eta_v^2 \sigma_v^2  
        + 2 K_v \eta_v^2 L_v^2 \| v_i^{t,k} - v_i^{t,0} \|^2
        + 2 K_v \eta_v^2 \big\|\nabla_v F_i(z_i^t,v_i^{t,0} )\big\|^2 \\
        & \leq \Big(1+\frac{1}{K_v -1} +2 K_v \eta_v^2 L_v^2\Big) \E \| v_i^{t,k} - v_i^{t,0} \|^2 +  K_v \eta_v^2 \sigma_v^2  + 2 K_v \eta_v^2 \|\nabla_v F_i(z_i^t,v_i^{t,0} )\|^2 \\
        &  \overset{c)} \leq \Big(1+\frac{2}{K_v -1}\Big) \E \| v_i^{t,k} - v_i^{t,0}\|^2 +  K_v \eta_v^2 \sigma_v^2  + 2 K_v \eta_v^2 \|\nabla_v F_i(z_i^t,v_i^{t,0} )\|^2 .
        \end{split}
    \end{equation}
where we used a) the inequality $2\alpha\beta \leq \alpha/K + K\beta$ for reals $\alpha, \beta, K$; b) $L$-smoothness of $F$, and c) the condition on the learning rate $\eta_v \leq 1/(2K_vL_v)$. Let $A = K_v \eta_v^2 \sigma_v^2  + 2 K_v \eta_v^2 \|\nabla_v F_i(z_i^t,v_i^{t,0} )\|^2  $. Unrolling the inequality and summing up the series gives for all $k \leq K_v-1 $:

\begin{equation}
    \begin{split}
        \E \| v_i^{t,k+1} - v_i^{t,0} \|^2 
        &  \leq \Big(1+\frac{2}{K_v -1} \Big) \E \| v_i^{t,k} - v_i^{t,0} \|^2 + A \\
        & \leq A \sum_{k=0}^{K_v-1} \Big( 1+ \frac{2}{k-1} \Big)^k 
        \leq \frac{A}{2}(K_v-1)\sum_{k=0}^{K_v-1} \Big( 1+ \frac{2}{K_v-1} \Big)^k \\
        & \leq \frac{A}{2}(K_v-1) \Big( 1+\frac{2}{K_v-1} \Big)^{K_v-1} .
    \end{split}
\end{equation}
Using the bound $(1+2/K_v-1)^{K_v-1} \leq e^2 <8 $ for $K_v > 1 $, we have: 
\begin{equation}
    \begin{split}
         \E \| v_i^{K_v-1} - v_i^{0} \|^2 
        &  \leq 4A(K_v-1) 
        \leq 16 \eta_v^2 K_v^2 \E \big\|\nabla  F(v^{0})\big\|^2 + 8 \eta_v^2 K_v^2 \sigma_v^2.
    \end{split}
\end{equation}
\end{proof}

\begin{lemma}[Local update for shared model $u_i$ in DFedPGP]
\label{lemma:local-update}
For all clients $i \in \{1,2,...,m\}$ and local iteration steps $k \in \{0,1,..., K_u-1\}$, assume that assumptions 2-4 hold and define $\nabla_u F_i (z_i^{t,k}, v_i^{t+1};\xi_i) = \nabla_u F_i (u_i^{t,k}/\mu_i^{t}, v_i^{t+1};\xi_i) $, we can get   

\begin{equation}
    \frac{1}{m}\sum_{i=1}^m \E \bigl\| u_i^{t,k} - u_i^t\bigr \|^2 
    \leq 32K_u\eta_u^2\sigma_u^2  + 32K_u\eta_u^2\sigma_g^2  
         +\frac{32 K_u\eta_u^2}{m}\sum_{i=1}^m \E\|\nabla f(z_{i}^t,V^{t+1})\|^2.
\end{equation}

\begin{proof}
 
\begin{equation}
    \begin{split}
        \E \bigl\| u_i^{t,k+1} - u_i^t\bigr \|^2 
        & =  \E \Big\| u_i^{t,k} - \eta_u \nabla_u F_i(z_i^{t,k}, v_i^{t+1};\xi_i) - u_i^t \Big\|^2 \\
        & \leq (1+\frac{1}{2K_u -1}) \E \| u_i^{t,k} - u_i^{t}\|^2 + 2K_u \eta_u^2 \E \Big\| \nabla_u F_i(z_i^{t,k}, v_i^{t+1};\xi_i) \Big\|^2 \\
        & \leq (1+\frac{1}{2K_u -1}) \E \| u_i^{t,k} - u_i^{t}\|^2 + 2K_u \eta_u^2 \E \Big\| \nabla_u F_i(z_i^{t,k}, v_i^{t+1};\xi_i) 
        - \nabla_u F_i(z_i^{t,k}, v_i^{t+1}) \\
        & + \nabla_u F_i(z_i^{t,k}, v_i^{t+1})  - \nabla_u F(z_i^{t,k}, V^{t+1}) +\nabla_u F(z_i^{t,k}, V^{t+1}) - \nabla_u F(z_i^{t}, V^{t+1})  
        + \nabla_u F(z_i^{t}, V^{t+1})
         \Big\|^2 \\
        & \leq \Big(1+\frac{1}{2K_u -1}\Big) \E \| u_i^{t,k} - u_i^{t} \|^2 + 8 K_u \eta_u^2 \Big( \sigma_u^2 + \sigma_g^2 + L_u^2 \E \|z_i^{t,k}-z_i^t\|^2 
        + \E \| \nabla_u F( z_i^t,V^{t+1} )\|^2 \Big).
    \end{split}
\end{equation}
where we use Assumption \ref{assmp:stoc-grad-var}, \ref{assmp:grad-diversity.} and $L$-smoothness in the last inequation.

In addition, according to line 11 of Algorithm \ref{DFedPGP}, we can obtain $ \E\|z_{i}^{t,k} - z_i^t\|^2 =\frac{1}{\|\mu_i^t\|^2} \E\|u_{i}^{t,k} - u_i^t\|^2$. According to Property 2.1 by \citep{taheri2020quantized}, there exists $\delta > 0$  that satisfies $ \|\mu_i^t\| > \delta$. Therefore, we can get $\E\|z_{i}^{t,k} - z_i^t\|^2 \leq \frac{1}{\delta^2} \E\|u_{i}^{t,k} - u_i^t\|^2$. Assume the learning rate $0<\eta_u<\frac{\delta}{8L_uK_u}$, then we have

 \begin{equation}
 \begin{split}
    \E\|u_{i}^{t,k+1}-u_i^t\|^2 
    &\leq \Big(1+\frac{1}{2K_u-1}
    +\frac{8K_u L_u^2\eta_u^2}{\delta^2}\big)\E\|u_{i}^{t,k} - u_i^t\|^2 
    + 8K_u\eta_u^2\sigma_u^2 + 8K_u\eta_u^2\sigma_g^2  
    + 8 K_u\eta_u^2\E\|\nabla f(z_{i}^t,V^{t+1})\|^2\\
    &  \leq \Big(1+\frac{1}{K_u-1}\Big)\E\|u_{i}^{t,k} - u_i^t\|^2 
    + 8K_u\eta_u^2\sigma_u^2 + 8K_u\eta_u^2\sigma_g^2  
    + 8 K_u\eta_u^2\E\|\nabla f(z_{i}^t,V^{t+1})\|^2\\
    &\leq \sum_{k=0}^{K_u-1}\Big(1+\frac{1}{K_u-1}\Big)^k \big( 8K_u\eta_u^2\sigma_u^2 
    + 8K_u\eta_u^2\sigma_g^2  
    + 8 K_u\eta_u^2\E\|\nabla f(z_{i}^t,V^{t+1})\|^2\big)\\
    &\leq (K_u-1)\Big((1+\frac{1}{K_u-1})^{K_u}-1\Big) \times \Big( 8K_u\eta_u^2\sigma_u^2 
    + 8K_u\eta_u^2\sigma_g^2  
    + 8 K_u\eta_u^2\E\|\nabla f(z_{i}^t,V^{t+1})\|^2\Big)\\
    \vspace{1cm}
    &\leq 32K_u\eta_u^2\sigma_u^2 
    + 32K_u\eta_u^2\sigma_g^2  
    + 32 K_u\eta_u^2\E\|\nabla f(z_{i}^t,V^{t+1})\|^2.
\end{split}
\end{equation}
where we use the inequality $(1+{\frac{1}{K_u-1}})^{K_u}\leq5$ holds for any $K_u> 1$ in the last equation. Summing up from $i=1$ to $m$, then we complete the proof.
    
\end{proof}
\end{lemma}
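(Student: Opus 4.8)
The plan is to establish this drift bound through a one-step recursion on $\E\|u_i^{t,k}-u_i^t\|^2$ that I can then unroll over the $K_u$ local steps. First I would write the single local-SGD update $u_i^{t,k+1}=u_i^{t,k}-\eta_u\nabla_u F_i(z_i^{t,k},v_i^{t+1};\xi_i)$, expand the squared norm against the anchor $u_i^t$, and apply the relaxed triangle (Young) inequality $\|a+b\|^2\le(1+\tfrac{1}{2K_u-1})\|a\|^2+2K_u\|b\|^2$, choosing the stabilizing parameter so the coefficient is tied to the number of local steps. This isolates a \emph{drift} term $(1+\tfrac{1}{2K_u-1})\E\|u_i^{t,k}-u_i^t\|^2$ and a \emph{gradient} term proportional to $2K_u\eta_u^2\,\E\|\nabla_u F_i(z_i^{t,k},v_i^{t+1};\xi_i)\|^2$.

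Next I would peel the stochastic gradient apart by inserting and subtracting three quantities: the full-batch local gradient $\nabla_u F_i(z_i^{t,k},v_i^{t+1})$, the averaged gradient $\nabla_u F(z_i^{t,k},V^{t+1})$, and the gradient frozen at the starting de-biased iterate $\nabla_u F(z_i^t,V^{t+1})$. Splitting the squared norm, the four resulting pieces are controlled respectively by: the bounded-variance assumption (Assumption~\ref{assmp:stoc-grad-var}), which caps the first by $\sigma_u^2$; the partial gradient diversity assumption (Assumption~\ref{assmp:grad-diversity.}), which caps the second by $\sigma_g^2$; the $L_u$-smoothness part of Assumption~\ref{assmp:smoothness}, which bounds the third by $L_u^2\,\E\|z_i^{t,k}-z_i^t\|^2$; and the last is exactly $\E\|\nabla_u F(z_i^t,V^{t+1})\|^2$, the target right-hand-side term.

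The crux of the argument, and the step I expect to be the main obstacle, is closing the recursion: the smoothness term is expressed in terms of the \emph{de-biased} differences $\|z_i^{t,k}-z_i^t\|^2$, not the raw $\|u_i^{t,k}-u_i^t\|^2$ on which the recursion is built. Here I would invoke the push-sum weight lower bound $\|\mu_i^t\|>\delta$ (Property~2.1 of \citep{taheri2020quantized}), giving $\E\|z_i^{t,k}-z_i^t\|^2\le\delta^{-2}\,\E\|u_i^{t,k}-u_i^t\|^2$ and letting the smoothness term feed back onto the drift. Imposing the learning-rate condition $\eta_u<\delta/(8L_uK_u)$ keeps the feedback coefficient $8K_uL_u^2\eta_u^2/\delta^2$ small enough to be absorbed, so the recursion coefficient simplifies from $1+\tfrac{1}{2K_u-1}$ to the cleaner $1+\tfrac{1}{K_u-1}$.

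Finally I would unroll the recursion from $k=0$, where $u_i^{t,0}=u_i^t$ makes the drift vanish, sum the geometric series $\sum_{k=0}^{K_u-1}(1+\tfrac{1}{K_u-1})^k$, and use the bound $(1+\tfrac{1}{K_u-1})^{K_u}\le5$ valid for $K_u>1$ to collapse all the step-count factors into the single constant $32$. Averaging the resulting per-client inequality over $i=1,\ldots,m$ then yields the stated bound, with the gradient-variance, gradient-diversity, and averaged-gradient terms each carrying the coefficient $32K_u\eta_u^2$.
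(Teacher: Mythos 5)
Your proposal follows the paper's proof essentially step for step: the same Young-inequality recursion with parameter $\tfrac{1}{2K_u-1}$ and coefficient $2K_u$, the same insertion of $\nabla_u F_i(z_i^{t,k},v_i^{t+1})$, $\nabla_u F(z_i^{t,k},V^{t+1})$, and $\nabla_u F(z_i^{t},V^{t+1})$ bounded via Assumptions 3, 4 and $L_u$-smoothness, the same push-sum de-biasing bound $\|\mu_i^t\|>\delta$ with the learning-rate condition $\eta_u<\delta/(8L_uK_u)$ to absorb the feedback term, and the same geometric-series unrolling with $(1+\tfrac{1}{K_u-1})^{K_u}\le 5$ followed by averaging over clients. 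This is the paper's argument in outline form, with no substantive deviation.
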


\begin{lemma}[Mixing connectivity \citep{assran2020asynchronous}]\label{th:lemma3}  Suppose the time-varying communication topology is strongly connected. It holds for $\forall i \in \{1,\cdots ,m\}$ and $t \geq$ 0 that
 
\begin{equation}
 \frac{1}{m} \sum_{i=1}^m \E \|{ {\bar{u}^t} - z_i^{t} } \|^2 
 \leq \frac{8K_u^2 \eta_u^2 C^2}{(1-q)^2K_u-8K_u^2 \eta_u^2 L_u^2 C^2}
 \Big(\sigma_u^2  + \sigma_g^2 + \E[\Delta_{\Bar{u}}^t] \Big).
\end{equation}

\begin{proof}

Suppose that Assumption \ref{assmp:mixing connectivity} holds. Let $\lambda = 1 - n D^{-(K_{u} + 1) \Delta B}$ and let $q = \lambda^{1/((K_{u}+ 1) \Delta B + 1)}$. Then there exists a constant $C$, it satisfies

\begin{equation}
    C < \frac{2 \sqrt{d} D^{(K_{u} + 1) \Delta B}}{\lambda^{\frac{(K_{u} + 1) \Delta B + 2}{(K_{u} + 1) \Delta B + 1}}}.
\end{equation}
where $ d $ is the dimension of $ \bar{u}^{t}$, $z_i^{t}$, and $ u_i^{0} $, such that, for all $ i=1,2,\dots,m$ (non-virtual nodes) and $ t\geq0 $, 

\begin{align}\label{eq:mixing_connect}
 \|{ {\bar{u}^t} - z_i^{t} } \| \leq C q^t \|{ u_i^{0}}\|  
+ \eta_u C \sum^t_{j=1} q^{t-j} 
\|{ \sum_{k=0} ^{K_u-1}\nabla_u F_{i}(z_i^{t,k},v^{j+1}_i;\xi_i)\|}. 
\end{align}

To unfold the stochastic gradient item, we get


\begin{equation} \label{eq:zv_xi}
    \begin{split}
        \E \Big\| \nabla_u F_{i}(z_i^{t,k},v^{j+1}_i;\xi_i) \Big\|^2 
        & \leq \Big\| \nabla_u F_i(z_i^{t,k}, v_i^{t+1};\xi_i) 
        - \nabla_u F_i(z_i^{t,k}, v_i^{t+1}) 
        + \nabla_u F_i(z_i^{t,k}, v_i^{t+1})  
        - \nabla_u F(z_i^{t,k}, V^{t+1}) \\
        & +\nabla_u F(z_i^{t,k}, V^{t+1}) 
        - \nabla_u F(z_i^{t}, V^{t+1})  
        + \nabla_u F(z_i^{t}, V^{t+1}) \Big\|^2  \\
        & \leq 4\sigma_u^2 + 4\sigma_g^2 
        + 4L_u^2 \E \|z_i^{t,k}-z_i^t\|^2  
        + 4\E \| \nabla_u F( z_i^t,V^{t+1} )\|^2  \\
        & \leq 4 \sigma_u^2 + 4\sigma_g^2 
        + \frac{4L_u^2}{\delta^2} \E \|u_i^{t,k}-u_i^t\|^2  
        + 4\E \| \nabla_u F( z_i^t,V^{t+1} )\|^2  \\ 
        & \overset{a)}\leq 4 \sigma_u^2 + 4\sigma_g^2 
        + \frac{128K_u L_u^2\eta_u^2}{\delta^2} \Big( \sigma_u^2  + \sigma_g^2  
        + \E\|\nabla f(z_{i}^t,V^{t+1}))\|^2 \Big)
        + 4\E \| \nabla_u F( z_i^t,V^{t+1} )\|^2  \\  
        & \leq 4\Big(1+\frac{32K_u L_u^2\eta_u^2}{\delta^2}\Big) \Big(\sigma_u^2  + \sigma_g^2 + \E \| \nabla_u F( z_i^t,V^{t+1} )\|^2 \Big).
    \end{split}
\end{equation}
where a) uses Lemma \ref{lemma:local-update}. Focusing on the last term we have:

\begin{equation} \label{eq:zV}
    \begin{split}
      \E \| \nabla_u F( z_i^t,V^{t+1} )\|^2 
      & \leq \E \| \nabla_u F( z_i^t,V^{t+1} ) 
      - \nabla_u F(\bar{u}^t, V^{t+1}) 
      + \nabla_u F(\bar{u}^t, V^{t+1}) \| ^2   \\
      & \leq  L_u^2 \E \|{ {\bar{u}^t} - z_i^{t} } \|^2 
      + \E[\Delta_{\Bar{u}}^t].
    \end{split}
\end{equation}

Substituting Formula (\ref{eq:zV}) and (\ref{eq:zv_xi}) into (\ref{eq:mixing_connect}), then squaring both sides and taking expectations, we have
\begin{equation}
    \begin{split}
       \E \|{ {\bar{u}^t} - z_i^{t} } \|^2 
        & \leq \big( C q^t \|{ u_i^{0}}\|  
         + \eta_u C \sum^t_{j=1} q^{t-j} 
        \E \|{ \sum_{k=0} ^{K_u-1}\nabla_u F_{i}(z_i^{t,k},v^{j+1}_i;\xi_i)\| }  \big)^2 \\
        & \overset{a)}  \leq 2C^2 q^{2t} \|{ u_i^{0}}\|^2  
        + 2\eta_u^2 C^2 \big( \sum^t_{j=1} q^{t-j} 
        \E \|{ \sum_{k=0}^{K_u-1} \nabla_u F_{i}(z_i^{t,k},v^{j+1}_i;\xi_i) }\| \big) ^2 \\
        & \leq 2C^2 q^{2t} \|{ u_i^{0}}\|^2  
        + \frac{2 K_u^2 \eta_u^2 C^2}{(1-q)^2}  
        \E \| \nabla_u F_{i}(z_i^{t,k},v^{j+1}_i;\xi_i) \|^2\\
        & \leq 2C^2 q^{2t} \|{ u_i^{0}}\|^2  
        + \frac{8 K_u^2 \eta_u^2 C^2}{(1-q)^2}  
        \Big(1+\frac{32K_u L_u^2\eta_u^2}{\delta^2}\Big) \Big(\sigma_u^2  + \sigma_g^2 + \E \| \nabla_u F( z_i^t,V^{t+1} )\|^2 \Big) \\
        & \leq 2C^2 q^{2t} \|{ u_i^{0}}\|^2  
        + \frac{8 K_u^2 \eta_u^2 C^2}{(1-q)^2}  
        \Big(1+\frac{32K_u L_u^2\eta_u^2}{\delta^2}\Big) \Big(\sigma_u^2  + \sigma_g^2 
        + L_u^2 \E \|{ {\bar{u}^t} - z_i^{t} } \|^2 
        + \E[\Delta_{\Bar{u}}^t] \Big). 
    \end{split}
\end{equation}
where a) uses $<x,y> \leq \frac{1}{2}\|x\|^2 + \frac{1}{2}\|y\|^2$. 

Move $\E \|{ {\bar{u}^t} - z_i^{t} } \|^2$ to the left side of the inequality and assume $\| u_i^0 \| = 0$ and $0 < \eta_u < \frac{\delta}{4\sqrt{2}K_u L_u}$, then we have

\begin{equation}
    \begin{split}
       \E \|{ {\bar{u}^t} - z_i^{t} } \|^2 
        & \leq  \frac{8K_u^2 \eta_u^2  C^2 (K_u+1)}
        {(1-q)^2K_u-8K_u^2 \eta_u^2 L_u^2 C^2 (K_u+1)}
        \Big(\sigma_u^2  + \sigma_g^2 
        + \E[\Delta_{\Bar{u}}^t] \Big) .
    \end{split}
\end{equation}

Summing up from $i=1$ to $m$, then we complete the proof.
\end{proof}
\end{lemma}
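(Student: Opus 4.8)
The plan is to begin from the deterministic per-client consensus-contraction inequality that Assumption~\ref{assmp:mixing connectivity} makes available via \citep{assran2020asynchronous}: with $\lambda = 1 - nD^{-(K_u+1)\Delta\mathcal{B}}$, $q = \lambda^{1/((K_u+1)\Delta\mathcal{B}+1)} \in (0,1)$, and a connectivity constant $C$, one has
\begin{equation*}
\|\bar u^t - z_i^t\| \le Cq^t\|u_i^0\| + \eta_u C\sum_{j=1}^t q^{t-j}\Bigl\|\textstyle\sum_{k=0}^{K_u-1}\nabla_u F_i(z_i^{t,k},v_i^{j+1};\xi_i)\Bigr\|.
\end{equation*}
Using the initialization $\|u_i^0\|=0$ kills the transient term, so everything reduces to controlling the geometrically weighted history of accumulated local stochastic gradients. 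First I would square this inequality and apply Cauchy--Schwarz together with $\sum_{j\ge 1} q^{t-j} \le 1/(1-q)$ to convert $(\sum_j q^{t-j} a_j)^2$ into $\tfrac{1}{(1-q)^2}$ times a representative squared gradient norm, which is where the $C^2/(1-q)^2$ prefactor of the target bound originates.

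The second step is to bound the per-step stochastic gradient norm $\E\|\nabla_u F_i(z_i^{t,k},v_i^{j+1};\xi_i)\|^2$. I would split it into four pieces by adding and subtracting $\nabla_u F_i(z_i^{t,k},v_i^{t+1})$, $\nabla_u F(z_i^{t,k},V^{t+1})$ and $\nabla_u F(z_i^t,V^{t+1})$: the first gap is controlled by the variance bound $\sigma_u^2$ (Assumption~\ref{assmp:stoc-grad-var}), the second by the partial gradient diversity $\sigma_g^2$ (Assumption~\ref{assmp:grad-diversity.}), the third by $L_u$-smoothness as $L_u^2\|z_i^{t,k}-z_i^t\|^2$, and the fourth is the consensus gradient $\|\nabla_u F(z_i^t,V^{t+1})\|^2$. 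To handle the smoothness drift I would pass from $z$ to $u$ using the Push-sum weight lower bound $\mu_i^t \ge \delta$ (Property~2.1 of \citep{taheri2020quantized}), giving $\|z_i^{t,k}-z_i^t\|^2 \le \delta^{-2}\|u_i^{t,k}-u_i^t\|^2$, and then invoke Lemma~\ref{lemma:local-update} to replace $\|u_i^{t,k}-u_i^t\|^2$ by $\sigma_u^2,\sigma_g^2$ and yet another copy of $\|\nabla_u F(z_i^t,V^{t+1})\|^2$. Collecting terms yields a clean bound of the form $4(1 + 32K_u L_u^2\eta_u^2/\delta^2)(\sigma_u^2 + \sigma_g^2 + \|\nabla_u F(z_i^t,V^{t+1})\|^2)$.

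The third step closes the recursion. The remaining consensus gradient is reduced by one more $L_u$-smoothness split around $\bar u^t$, namely $\|\nabla_u F(z_i^t,V^{t+1})\|^2 \le L_u^2\|\bar u^t - z_i^t\|^2 + \Delta_{\bar u}^t$. Substituting back into the squared contraction inequality produces a self-referential bound in which $\E\|\bar u^t - z_i^t\|^2$ appears on both sides, with the right-hand coefficient proportional to $K_u^2\eta_u^2 L_u^2 C^2/(1-q)^2$. I would then move this term to the left and divide, so that the denominator becomes exactly $(1-q)^2 K_u - 8K_u^2\eta_u^2 L_u^2 C^2$. Finally, averaging over $i=1,\dots,m$ gives the stated mean bound.

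The main obstacle is the self-reference created because the consensus error re-enters the right-hand side through two separate channels --- the fourth term of the four-way gradient split and, nested inside it, the $\|\nabla_u F(z_i^t,V^{t+1})\|^2$ contribution coming from Lemma~\ref{lemma:local-update}. Both must be reduced via smoothness and combined before the term can be isolated, and the subtraction is only legitimate when the denominator $(1-q)^2 K_u - 8K_u^2\eta_u^2 L_u^2 C^2$ stays strictly positive. The step-size hypothesis $0<\eta_u<\delta/(4\sqrt2\,K_u L_u)$ plays a double role here: it forces $32K_u L_u^2\eta_u^2/\delta^2 \le 1$, so the $\delta$-dependent amplification factor inherited from Lemma~\ref{lemma:local-update} is at most a constant and does not blow up the numerator, and it keeps $\eta_u$ small enough that the absorbed consensus-error coefficient behaves as a genuine contraction rather than an amplification. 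Tracking the numerical constants ($4,8,32,128$) through the nested substitutions is the most error-prone bookkeeping, but it is routine once the recursion is set up correctly.
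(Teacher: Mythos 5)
Your proposal is correct and follows essentially the same route as the paper's own proof: the contraction inequality from \citep{assran2020asynchronous} with $\|u_i^0\|=0$, the four-way gradient split controlled by Assumptions \ref{assmp:stoc-grad-var} and \ref{assmp:grad-diversity.} plus the $\mu_i^t \ge \delta$ de-biasing bound and Lemma \ref{lemma:local-update}, the smoothness reduction $\E\|\nabla_u F(z_i^t,V^{t+1})\|^2 \le L_u^2\E\|\bar u^t - z_i^t\|^2 + \E[\Delta_{\bar u}^t]$, and the final absorption of the consensus error under the step-size condition $0<\eta_u<\delta/(4\sqrt{2}K_uL_u)$. Your bookkeeping even matches the lemma statement's denominator $(1-q)^2K_u - 8K_u^2\eta_u^2L_u^2C^2$, whereas the paper's last display carries an extra $(K_u+1)$ factor absorbed into the constants, so no gap remains.
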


\subsection{Proof of Convergence Analysis}
\textbf{Proof Outline and the Challenge of Dependent Random Variables.}
    We start with 
    \begin{align} 
        \begin{aligned}
        F\left(\Bar{u}^{t+1}, V^{t+1}\right)
        - F\left(\Bar{u}^{t}, V^{t}\right)
        =&\, F\left(\Bar{u}^{t}, V^{t+1}\right)
        - F\left(\Bar{u}^{t}, V^{t}\right) + F\left(\Bar{u}^{t+1}, V^{t+1}\right)
        - F\left(\Bar{u}^{t}, V^{t+1}\right) .
        \end{aligned}
    \end{align}
    
    The first line corresponds to the effect of the $v$-step and the second line to the $u$-step. The former is 
    \begin{equation}
    \begin{aligned}
        F\left(\Bar{u}^{t}, V^{t+1}\right)- F\left(\Bar{u}^{t}, V^{t}\right) & = \frac{1}{m}\sum_{i=1}^m \E \Big[F_i(\Bar{u}^t, v_i^{t+1} ) - F_i(\Bar{u}^t, v_i^{t} )\Big]\\
        & \le  \frac{1}{m}\sum_{i=1}^m \E \Big[\Big <\nabla_v F_i\left(\Bar{u}
        ^{t}, v^{t}_i\right), v^{t+1}_i - v^{t}_i \Big>
        + \frac{L_v}{2}\|v^{t+1}_i - v^{t}_i \|^2 \Big] .
    \end{aligned}
    \end{equation}
    
    It is easy to handle with standard techniques that rely on the smoothness of $F\left(u^{t}, \cdot\right)$. 
    The latter is more challenging. 
    In particular, the smoothness bound for the $u$-step gives us                                          
    \begin{align}
        F&\left(\Bar{u}^{t+1}, V^{t+1}\right)
        - F\left(\Bar{u}^{t}, V^{t+1}\right)
        \le 
        \Big <\nabla_u F\left(\Bar{u}
        ^{t}, V^{t+1}\right), \Bar{u}^{t+1} - \Bar{u}^{t} \Big>
        + \frac{L_u}{2}\|\Bar{u}^{t+1} - \Bar{u}^{t}\|^2 \,.
    \end{align}
\subsubsection{Proof of Convergence Analysis for DFedPGP}
\textbf{Analysis of the $u$-Step.}
    \begin{equation}
        \begin{aligned}
        & \E \Big [F\left(\Bar{u}^{t+1}, V^{t+1}\right)
        - F\left(\Bar{u}^{t}, V^{t+1}\right) \Big]
         \le 
        \Big <\nabla_u F\left(\Bar{u}
        ^{t}, V^{t+1}\right), \Bar{u}^{t+1} - \Bar{u}^{t} \Big>
        + \frac{L_u}{2}\E\|\Bar{u}^{t+1} - \Bar{u}^{t}\|^2\\
        & \leq \frac{-\eta_u}{m}\sum_{i=1}^m\E\Big <\nabla_u F\left(\Bar{u}
        ^{t}, V^{t+1}\right), \sum_{k=0}^{K_u-1}\nabla_u F\left(z_i^{t,k}, v_i^{t+1}; \xi_i\right)\Big> 
        + \frac{L_u}{2}\E\|\Bar{u}^{t+1} - \Bar{u}^{t}\|^2\\
        & \leq -\eta_uK_u \E [\Delta_{\Bar{u}}^t] + \frac{\eta_u}{m}\sum_{i=1}^m\sum_{k=0}^{K_u-1} \E \Big< \nabla_u F\left(\Bar{u}^{t}, V^{t+1}\right), \nabla F\left(\Bar{u}^t, v_i^{t+1}\right) - \nabla_u F\left(z_i^{t,k}, v_i^{t+1}; \xi_i\right) \Big> + \frac{L_u}{2}\E\|\Bar{u}^{t+1} - \Bar{u}^{t}\|^2 \\
        & \overset{a)}{\leq} \frac{-\eta_uK_u }{2} \E [\Delta_{\Bar{u}}^t] + \underbrace{\frac{\eta_uL_u^2}{2m}\sum_{i=1}^m\sum_{k=0}^{K_u-1} \E \| z_i^{t,k} - \Bar{u}^t \|^2}_{\mathcal{T}_{1, u}} + \underbrace{\frac{L_u}{2}\E\|\Bar{u}^{t+1} - \Bar{u}^{t}\|^2}_{\mathcal{T}_{2, u}}.
        \end{aligned}
    \end{equation}

 Where a) uses $\E\left[ \nabla_u F(z_i^{t,k}, v_i^{t+1}; \xi_i) \right]= \nabla_u F\left(z_i^{t,k}, v_i^{t+1}\right)$ and $\left<x, y\right> \leq \frac{1}{2}\|x\|^2 + \frac{1}{2}\|y\|^2 $ for vectors $x, y$ followed by $L$-smoothness. 
 
For $\mathcal{T}_{1, u}$, we can use Lemma \ref{th:lemma3} and  set $AA =  \frac{8K_u^2 \eta_u^2 C^2 (K_u+1)}{(1-q)^2K_u-8K_u^2 L_u^2 \eta_u^2  C^2 (K_u+1)}$, then we have:
\begin{equation}\label{T1_u}
    \begin{split}
        \mathcal{T}_{1, u} \leq  \frac{K_u L_u^2 \eta_u}{2} 
        \Big(\sigma_u^2  + \sigma_g^2 
        + \E[\Delta_{\Bar{u}}^t] \Big) AA.
     \end{split}
\end{equation}

Meanwhile, for $\mathcal{T}_{2, u}$,
    \begin{equation}
    \begin{split}
     \mathcal{T}_{2, u}
     & \leq \frac{\eta_u^2L_u}{2m} \sum_{i=1}^m\sum_{k=0}^{K_u-1}\Big \|\nabla_u F\left(z_i^{t,k}, v_i^{t+1}; \xi_i\right) \Big \|^2 \\
     &\overset{a)}{\leq} \frac{\eta_u^2L_u}{2m} \sum_{i=1}^m\sum_{k=0}^{K_u-1}\Big \|\nabla_u F\left(z_i^{t,k}, v_i^{t+1}; \xi_i\right) 
      - \nabla_u F\left(z_i^{t,k}, v_i^{t+1}\right) 
     + \nabla_u F\left(z_i^{t,k}, v_i^{t+1}\right) 
     - \nabla_u F\left(z_i^{t}, v_i^{t+1}\right) \\
     &+ \nabla_u F\left(z_i^{t}, v_i^{t+1}\right)  
      + \nabla_u F\left(z_i^{t}, V^{t+1}   \right) 
    + \nabla_u F\left(z_i^{t}, V^{t+1}   \right) 
    - \nabla_u F\left(\Bar{u}^{t} , V^{t+1}\right) 
    + \nabla_u F\left(\Bar{u}^{t}, V^{t+1}\right)\Big \|^2 \\
    &  \leq \frac{5}{2}\eta_u^2K_uL_u  \Big(  \sigma_u^2 
    +  \frac{L_u^2}{m\delta^2}\sum_{i=1}^m  \E\|u_{i}^{t,k} - u_i^t\|^2 
    +  \sigma_g^2  
    +  \frac{L_u^2}{m}\sum_{i=1}^m\E \|z_i^{t}-\Bar{u}^{t}\|^2 + \E[\Delta_{\Bar{u}}^t] \Big)\\
    &  \leq \frac{5}{2}K_uL_u\eta_u^2  \Big(  \sigma_u^2  
    +  \sigma_g^2  
    + \E[\Delta_{\Bar{u}}^t] 
    +  \frac{32K_uL_u^2\eta_u^2}{\delta^2}  
    \Big(\sigma_u^2 + \sigma_g^2  + \E[\Delta_{\Bar{u}}^t]\Big)
    \Big(L_u^2AA+1\Big)
    + L_u^2 \Big( \sigma_u^2 + \sigma_g^2 + \E[\Delta_{\Bar{u}}^t] \Big)AA \Big) \\
    & \leq \frac{5}{2}K_uL_u\eta_u^2 
    \big[ 1+ \frac{32K_uL_u^2\eta_u^2}{\delta^2} (L_u^2AA+1)
    + AA \big]
    \big(\sigma_u^2 + \sigma_g^2 + \E[\Delta_{\Bar{u}}^t] \big).
    \end{split}
    \end{equation}
where we use Assumption \ref{assmp:stoc-grad-var}, \ref{assmp:grad-diversity.} and $L$-Smoothness in a). Based on the analysis above, we have:

\begin{equation}
    \begin{split}
        \E \Big [F\left(\Bar{u}^{t+1}, V^{t+1}\right)
        & - F\left(\Bar{u}^{t}, V^{t+1}\right) \Big]
         \le  \frac{K_u \eta_u}{2}  \E[\Delta_{\Bar{u}}^t]
         + \mathcal{T}_{1, u} + \mathcal{T}_{2, u}\\
         & \le \big( \frac{-\eta_uK_u}{2} 
         + \frac{K_u L_u^2\eta_u}{2}AA 
         + \frac{5K_uL_u \eta_u^2}{2}
         \big[ 1
         + \frac{32K_u L_u^2 \eta_u^2}{\delta^2} (L_u^2AA+1) + L_u^2 AA\big] \big)
         \E[\Delta_{\Bar{u}}^t]\\
         & + \big( \frac{k_u L_u^2\eta_u}{2}AA 
         + \frac{5K_uL_u \eta_u^2}{2}
         \big[ 1
         + \frac{32K_u L_u^2 \eta_u^2}{\delta^2} (L_u^2AA+1) + L_u^2 AA\big] \big) \big(\sigma_u^2 + \sigma_g^2 \big) .
    \end{split}
\end{equation}

\textbf{Analysis of the $v$-Step.}
\begin{equation}
        \begin{split}
           \E \Big [ F\left(\Bar{u}^{t}, V^{t+1}\right)
        - F\left(\Bar{u}^{t}, V^{t}\right) \Big]
        &  \le  \underbrace{\frac{1}{m}\sum_{i=1}^m \E 
        \Big <\nabla_v F_i\left(\Bar{u}
        ^{t}, v^{t}_i\right), v^{t+1}_i - v^{t}_i \Big>}_{\mathcal{T}_{1, v}}
        + \underbrace{\frac{L_v}{2m}\sum_{i=1}^m \E  \|v^{t+1}_i - v^{t}_i \|^2 }_{\mathcal{T}_{2, v}}.
        \end{split}
    \end{equation}
    
For $\mathcal{T}_{1, v}$, 
\begin{equation}\label{eq:T_1_a}
    \begin{split}
       \mathcal{T}_{1, v} & \leq \frac{1}{m}\sum_{i=1}^m \E 
        \Big <\nabla_v F_i\left(\Bar{u}
        ^{t}, v^{t}_i\right) - \nabla_v F_i\left(z_i
        ^{t}, v^{t}_i\right) + \nabla_v F_i\left(z_i
        ^{t}, v^{t}_i\right), -\eta_v \sum_{k=0}^{K_v-1} \E \nabla_v F_i(u_i^t, v^{t}_i; \xi_i) \Big> \\
        & \overset{a)}{\leq} \frac{-\eta_vK_v}{m}\sum_{i=1}^m \E \| \nabla_v F_i(u_i^t, v^{t}_i) \|^2 + \frac{1}{m}\sum_{i=1}^m \E 
        \Big <\nabla_v F_i\left(\Bar{u}
        ^{t}, v^{t}_i\right) - \nabla_v F_i\left(z_i
        ^{t}, v^{t}_i\right), v^{t+1}_i - v^{t}_i \Big> \\
        & \overset{b)}{\leq} -\eta_vK_v \E [\Delta_v^t] + \underbrace{\frac{L_{vu}^2}{2m}\sum_{i=1}^m \E \|\Bar{u}^t-z_i^{t}\|^2}_{\mathcal{T}_{3, v}} + \underbrace{\frac{1}{2m}\sum_{i=1}^m \E \|v^{t+1}_i - v^{t}_i\|^2}_{\frac{1}{L_v}\mathcal{T}_{2, v}}.
    \end{split}
\end{equation}
where a) and b) is get from the unbiased expectation property of $\nabla_v F_i(u_i^{t},v^{t}_i; \xi_i)$  and $<x, y> \leq \frac{1}{2}(\|x\|^2+\|y\|^2)$, respectively.

For $\mathcal{T}_{2, v}$, according to Lemma \ref{le:v}, we have 

\begin{equation}\label{eq:T_2_a}
    \begin{split}
       \mathcal{T}_{2, v} & \leq \frac{L_v}{2}\Big( \frac{16 \eta_v^2 K_v^2}{m}\sum_{i=1}^m \E \|\nabla_v F_i(u_i^t, v^{t}_i)\|^2 + 8 \eta_v^2 K_v^2 \sigma_v^2\Big)\\
       & \leq   8L_v\eta_v^2 K_v^2 \E [\Delta_v^t] + 4L_v\eta_v^2 K_v^2 \sigma_v^2.
    \end{split}
\end{equation}

For $\mathcal{T}_{3, v}$, according to Lemma \ref{th:lemma3}, we have 

\begin{equation}\label{eq:T_3_a}
    \frac{L_{vu}^2}{2m}\sum_{i=1}^m 
    \E \|\Bar{u}^t-z_i^{t}\|^2 
    \leq  \frac{L_{vu}^2}{2} 
    \big(\sigma_u^2 + \sigma_g^2 + \E[\Delta_{\Bar{u}}^t] \big) AA.
\end{equation}

After that, summing Formula (\ref{eq:T_1_a}), (\ref{eq:T_2_a}) and (\ref{eq:T_3_a}), we have

\begin{equation}
    \begin{split}
      \E \Big [ F\left(\Bar{u}^{t}, V^{t+1}\right)
        - F\left(\Bar{u}^{t}, V^{t}\right) \Big]  
        & \le \Big(-\eta_vK_v + 8\eta_v^2K_v^2 L_v + 8\eta_v^2K_v^2 \Big)\E [\Delta_v^t] +  4\eta_v^2K_v^2\sigma_v^2 L_v^2(1+L_v) \\
        \vspace{0.5cm}
        & + \frac{L_{vu}^2}{2} \big(\sigma_u^2 + \sigma_g^2 + \E[\Delta_{\Bar{u}}^t] \big) AA .
    \end{split}
\end{equation}

\textbf{Obtaining the Final Convergence Bound.} 

\begin{align} \label{eq:pfl-am:pf:2}
        \begin{aligned}
      & \E \Big [   F\left(\Bar{u}^{t+1}, V^{t+1}\right)
        - F\left(\Bar{u}^{t}, V^{t}\right)  \Big]  
        =\E \Big [  F\left(\Bar{u}^{t}, V^{t+1}\right)
        - F\left(\Bar{u}^{t}, V^{t}\right) + F\left(\Bar{u}^{t+1}, V^{t+1}\right)
        - F\left(\Bar{u}^{t}, V^{t+1}\right)  \Big]    \\
         & \leq \big( \frac{-\eta_uK_u}{2} 
         + \frac{K_u L_u^2\eta_u}{2}AA 
         + \frac{5K_uL_u \eta_u^2}{2}
         \big[ 1
         + \frac{32K_u L_u^2 \eta_u^2}{\delta^2} (L_u^2AA+1) + L_u^2 AA\big] 
         + \frac{L_{vu}^2}{2}AA \big)
         \E[\Delta_{\Bar{u}}^t] \\
        & + \Big(-\eta_vK_v + 8\eta_v^2K_v^2 L_v 
        + 8\eta_v^2K_v^2 \Big)  \E [\Delta_v^t] 
        + 4\eta_v^2K_v^2 L_v^2 \sigma_v^2 (1+L_v) \\
        & +\big( \frac{K_u L_u^2\eta_u}{2}AA 
         + \frac{5K_uL_u \eta_u^2}{2}
         \big[ 1
         + \frac{32K_u L_u^2 \eta_u^2}{\delta^2} (L_u^2AA+1) + L_u^2 AA\big] 
         + \frac{L_{vu}^2}{2}AA \big) \Big(\sigma_u^2+\sigma_g^2\Big) .
        \end{aligned}
    \end{align}
    
Summing from $t=1$ to $T$, assume the local learning rates satisfy $\eta_u=\mathcal{O}({1}/{L_uK_u\sqrt{T}}), \eta_v=\mathcal{O}({1}/{L_vK_v\sqrt{T}})$, $F^{*}$ is denoted as the minimal value of $F$, i.e., $F(\bar{u}, V)\ge F^*$ for all $\bar{u} \in \mathbb{R}^{d}$, and $V=(v_1,\ldots,v_m)\in\mathbb{R}^{d_1+\ldots+d_m}$. Assume $C^2 \ll (1-q)^2T$, then unfold $AA$, we can generate

\begin{equation}
    \begin{split}
        \frac{1}{T}\sum_{i=1}^T \bigl(\frac{1}{L_u} \E \bigl[\Delta_{\bar{u}}^t \bigr] + \frac{1}{L_v} \E [\Delta_{v}^t \bigr] \bigr) 
        & \leq \mathcal{O}\Big(\frac{F(\bar{u}^1, V^1) - F^*}{\sqrt{T}} 
        +\frac{ (1+L_v)\sigma_v^2 }{\sqrt{T}} 
        + \big(\sigma_u^2 + \sigma_g^2 \big)
        \big( \frac{ C^2}{(1-q)^2 L_u T} \\
        &   +  \frac{1}{K_u L_u \sqrt{T}} 
        + \frac{1}{K_u L_u \delta^2 T^{3/2}}
        + \frac{ C^2}{K_u L_u (1-q)^2 T^{3/2}}
        +\frac{L_{vu}^2 C^2 }{(1-q)^2 L_u^2 \sqrt{T}} \big).
    \end{split}
\end{equation}

Combining $\chi := \max\{L_{uv},\,L_{vu}\}\big/\sqrt{L_u L_v}$ in Assumption \ref{assmp:smoothness} and assume that
\begin{equation}
    \begin{split}
       \sigma_1^2 & = (1+L_v)\sigma_v^2
        + \big(\frac{1}{K_u L_u} 
        + \frac{ L_v \chi^2 C^2}{(1-q)^2 L_u} \big) 
        \big(\sigma_u^2 + \sigma_g^2\big) , \\
        \sigma_2^2 &= \frac{C^2}{(1-q)^2 L_u  }
         \big(\sigma_u^2 + \sigma_g^2\big) , \\
        \sigma_3^2 &= \big( \frac{1}{K_u L_u \delta^2}
        + \frac{ C^2}{(1-q)^2 K_u L_u} \big) \big(\sigma_u^2 + \sigma_g^2 \big). 
    \end{split}
\end{equation}

Then, we have the final convergence bound:

\begin{equation} 
    \frac{1}{T}\sum_{i=1}^T \bigl(\frac{1}{L_u} \E \bigl[\Delta_{\bar{u}}^t \bigr] 
    + \frac{1}{L_v} \E [\Delta_{v}^t \bigr] \bigr) \leq \mathcal{O}\Big(\frac{F(\bar{u}^1, V^1) - F^*}{\sqrt{T}} 
    + \frac{\sigma_1^2}{\sqrt{T}}
    + \frac{\sigma_2^2}{T}
    + \frac{\sigma_3^2}{\sqrt{T^3}} \Big).
\end{equation}


\end{document}